\author{anonymous}
\definecolor{linkcolor}{RGB}{83,83,182}
\definecolor{citecolor}{RGB}{128,0,128}
\newcommand{\GD}{\text{GD}}
\begin{document}

\twocolumn[
  \aistatstitle{Anderson acceleration of coordinate descent}
  \aistatsauthor{Quentin Bertrand \And Mathurin Massias}
\aistatsaddress{
Universit\'e Paris-Saclay \\ Inria, CEA  \\ Palaiseau, France  \And
MaLGa \\ DIBRIS \\ University of Genova}
]

\begin{abstract}
  %!TEX root = ../extrapolcd.tex

Acceleration of first order methods is mainly obtained via inertia à la Nesterov, or via nonlinear extrapolation.
The latter has known a recent surge of interest, with successful applications to gradient and proximal gradient techniques.
On multiple Machine Learning problems, coordinate descent achieves performance significantly superior to full-gradient methods.
Speeding up coordinate descent in practice is not easy:
inertially accelerated versions of coordinate descent are theoretically accelerated, but might not always lead to practical speed-ups.
We propose an accelerated version of coordinate descent using extrapolation, showing considerable speed up in practice, compared to inertial accelerated coordinate descent and extrapolated (proximal) gradient descent.
Experiments on least squares, Lasso, elastic net and logistic regression validate the approach.

\end{abstract}

%!TEX root = ../extrapolcd.tex
%
%%%%%%%%%%%%%%%%%%%%%%%%%%%%%%%%%%%%%%%%%%
\section{Introduction}
%%%%%%%%%%%%%%%%%%%%%%%%%%%%%%%%%%%%%%%%%%
Gradient descent is the workhorse of modern convex optimization \citep{Nesterov04,Beck17}.
For composite problems, proximal gradient descent retains the nice properties enjoyed by the latter.
In both techniques, inertial acceleration achieves accelerated convergence rates \citep{Nesterov83,Beck_Teboulle09}.

Coordinate descent is a variant of gradient descent, which updates the iterates one coordinate at a time \citep{tseng2009coordinate,Friedman_Hastie_Tibshirani10}.
Proximal coordinate descent has been applied to numerous Machine Learning problems \citep{Shalev-Shwartz_Zhang13,Wright_2015,Shi_Tu_Xu_Yin_2016}, in particular the Lasso \citep{Tibshirani96}, elastic net \citep{Zou_Hastie05} or sparse logistic regression \citep{Ng04}.
It is used in preeminent packages such as scikit-learn \citep{Pedregosa_etal11}, glmnet \citep{Friedman_Hastie_Tibshirani2009},
libsvm \citep{Fan_Chang_Hsieh_Wang_Lin08} or lightning \citep{Blondel_Pedregosa2016}.
On the theoretical side, inertial accelerated versions of coordinate descent \citep{Nesterov2012,Lin_Lu_Xiao_2014,Fercoq_Richtarik2015} achieve accelerated rates.
Note that usual lower bounds \citep[Sec. 2.1.2]{Nesterov04} are derived for methods with iterates lying in the span of previous gradients, which is not the case for coordinate descent.
However there also exists similar lower bounds for cyclic coordinate descent \citep{Sun_Ye2019}.

\begin{figure}[tb]
  \centering
  \includegraphics[width= \linewidth]{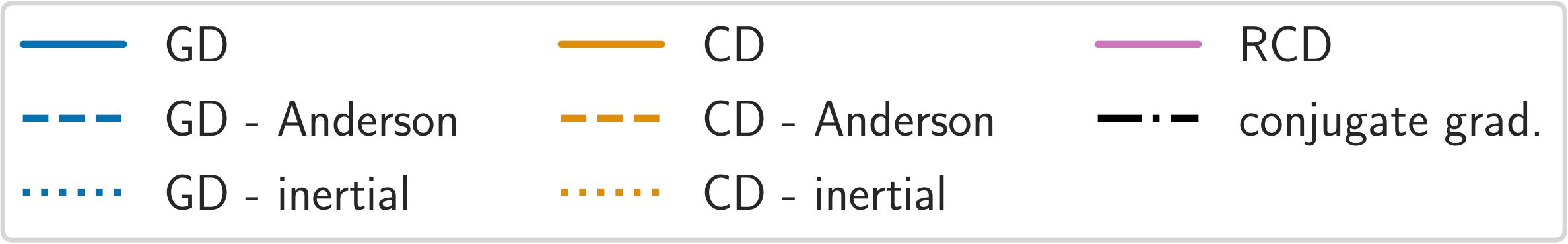}
  \includegraphics[width=\linewidth]{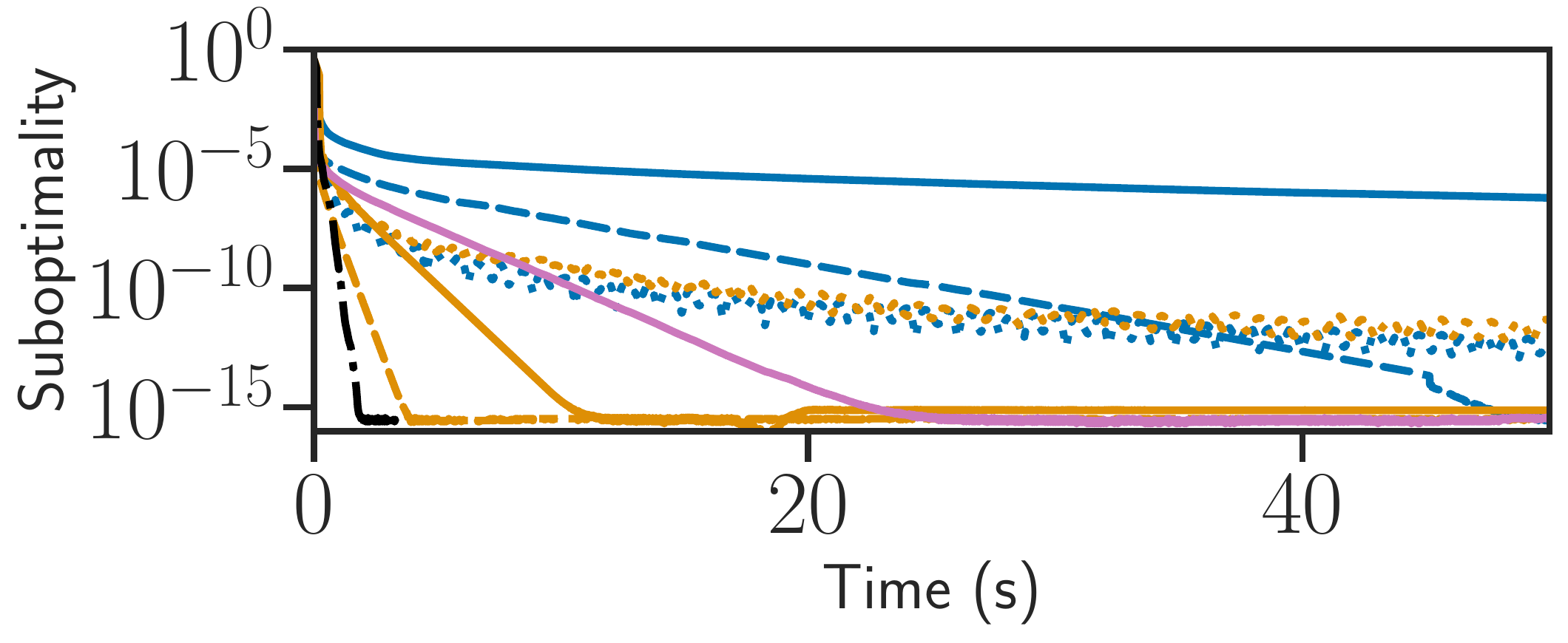}
  \caption{Suboptimality along time for a quadratic problem on the 5000 first features of the \emph{rcv1} dataset.
  GD: gradient descent,
  (R)CD: (randomized) coordinate descent.
  }
  \label{fig:intro}
\end{figure}
\sloppy
To obtain accelerated rates, Anderson extrapolation \citep{Anderson65} is an alternative to inertia: it provides acceleration by exploiting  the iterates' structure.
This procedure has been known for a long time, under various names and variants \citep{Wynn1962,Eddy1979,Smith_Ford_Sidi1987}, see \citet{Sidi17,Brezinski2018} for reviews.
Anderson acceleration enjoys accelerated rates on quadratics \citep{Golub_Varga1961},
but theoretical guarantees in the nonquadratic case are weaker \citep{Scieur_dAspremont_Bach2016}.
Interestingly, numerical performances still show significant improvements on nonquadratic objectives.
Anderson acceleration has been adapted to various algorithms such as Douglas-Rachford \citep{Fu2019}, ADMM \citep{Poon_Liang2019} or proximal gradient descent \citep{Zhang_ODonoghe_Boyd2018,Mai_Johansson_19,Poon_Liang2020}.
Among main benefits, the practical version of Anderson acceleration is memory efficient, easy to implement, line search free, has a low cost per iteration and does not require knowledge of the strong convexity constant.
Finally, it introduces a single additional parameter, which often does not require tuning (see \Cref{sub:parameter_setting}).

In this work:
\begin{itemize}
  \item  We propose an Anderson acceleration scheme for coordinate descent, which, as visible on \Cref{fig:intro}, outperforms inertial and extrapolated gradient descent, as well as inertial and randomized coordinate descent.
  \item The acceleration is obtained even though the iteration matrix is not symmetric, a notable problem in the analysis of Anderson extrapolation.
  \item We empirically highlight that the proposed acceleration technique can generalize in the non-quadratic case (\Cref{alg:anderson_cd}) and significantly improve proximal coordinate descent algorithms (\Cref{sec:expes}), which are state-of-the-art first order methods on the considered problems.
\end{itemize}
\begin{figure*}[t]
  \begin{minipage}[t]{0.47\linewidth}
      \begin{algorithm}[H]
        \SetKwInOut{Init}{init}
        \Init{$x^{(0)} \in \bbR^p$}
        \caption{Offline Anderson extrapolation}\label{alg:offline}
        \For{$k = 1, \ldots$}{
            $x^{(k)} = T x^{(k - 1)} + b$ \tcp*[r]{regular linear iteration}

            $U = [x^{(1)} - x^{(0)}, \ldots, x^{(k)} - x^{(k - 1)}]$

            $c = (U^\top U)^{-1} \mathbf{1}_k / \mathbf{1}_k^\top (U^\top U)^{-1} \mathbf{1}_k \in \bbR^k$

            $x_{\mathrm{e-off}}^{(k)} = \sum_{i=1}^k c_i x^{(i)}$ \tcp*[r]{does not affect $x^{(k)}$}
        }
      \Return{$x_{\mathrm{e-off}}^{(k)}$}
      \end{algorithm}
      \begin{subfigure}{1\textwidth}
        \centering
        \includegraphics[width=0.6\linewidth]{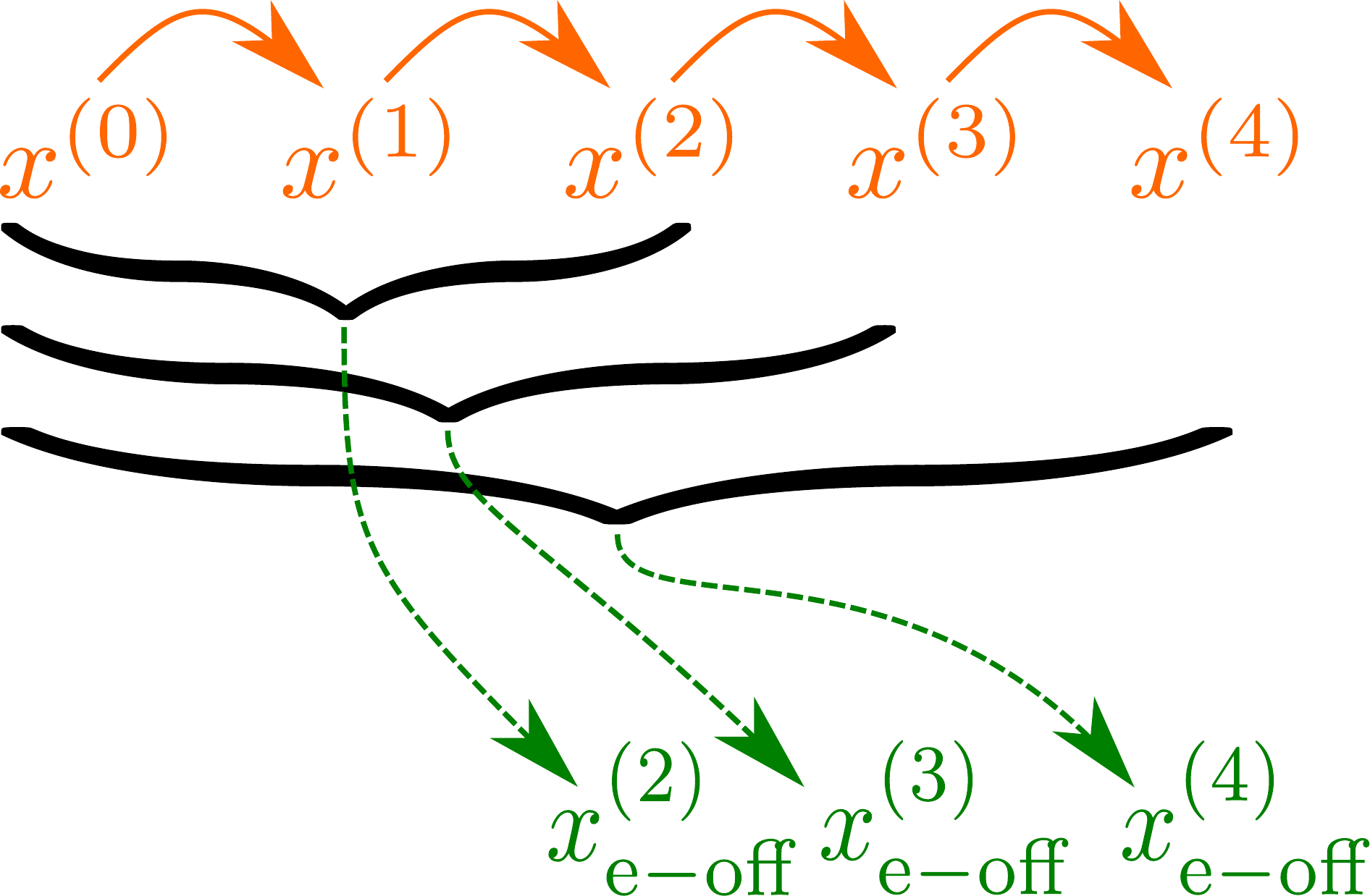}
        \caption{Offline.}
        \label{fig:aa_offline}
    \end{subfigure}
  \end{minipage}
  \hfill
  \begin{minipage}[t]{0.47\linewidth}
    \begin{algorithm}[H]
    \caption{Online Anderson extrapolation}\label{alg:online}
    \SetKwInOut{Init}{init}
    \setcounter{AlgoLine}{0}
    \Init{$x^{(0)} \in \bbR^p$}

    \For{$k = 1, \ldots$}{

        $x^{(k)} = T x^{(k - 1)} + b$\tcp*[r]{regular iteration}

        \If{$k = 0 \quad \mathrm{mod} \, K$}{
            $U = [x^{(k - K + 1)} - x^{(k - K)}, \ldots, x^{(k)} - x^{(k - 1)}]$

            $c = (U^\top U)^{-1} \mathbf{1}_K / \mathbf{1}_K^\top (U^\top U)^{-1} \mathbf{1}_K \in \bbR^K$

            $x_{\mathrm{e-on}}^{(k)} = \sum_{i=1}^K c_i x^{(k - K + i)}$

            $x^{(k)} = x_{\mathrm{e-on}}^{(k)}$\tcp*[r]{base sequence changes}
        }
    }
    \Return{$x^{(k)}$}
  \end{algorithm}
  \begin{subfigure}{1\textwidth}
      \centering
      \includegraphics[width=0.6\linewidth]{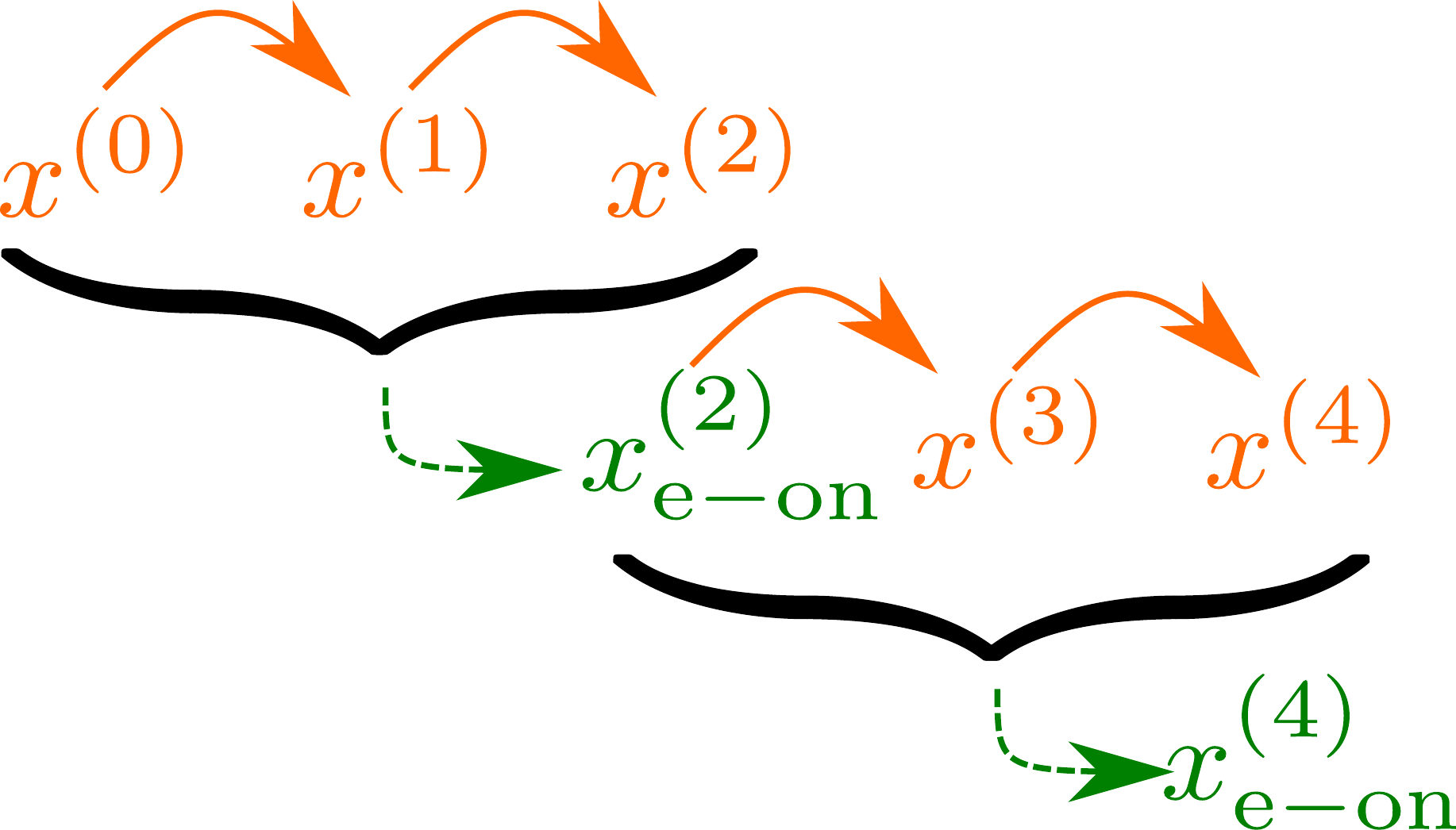}
      \caption{Online.}
      \label{fig:aa_online}
  \end{subfigure}
  \end{minipage}
  \caption{Illustrations of offline (left) and online (right) Anderson extrapolation for $K=2$.}
  \label{fig:aa_offline_online}
\end{figure*}

\paragraph{Notation}
The $j$-th line of the matrix $A$ is $A_{j:}$ and its $j$-th column is $A_{:j}$.
The canonical basis vectors of $\bbR^p$ are $e_j$.
The vector of size $K$ with all one entries is $\mathbf{1}_K$.
The spectral radius of the matrix $A$, $\rho(A)$, is the largest eigenvalue modulus of $A$.
The set of $p$ by $p$ symmetric positive semidefinite matrices is $\bbS_{+}^p$.
The condition number $\kappa(A)$ of a matrix $A$ is its largest singular value divided by its smallest.
A positive definite matrix $A$ induces the norm $\normin{x}_A = \sqrt{x^\top A x}$.
The proximity operator of the function $g$ is $\prox_g(x) = \argmin_y g(y) + \frac{1}{2} \norm{x - y}^2$.

%!TEX root = ../extrapolcd.tex
%
%%%%%%%%%%%%%%%%%%%%%%%%%%%%%%%%%%%%%%%%%%
\section{Anderson extrapolation}
\label{sec:anderson}
%%%%%%%%%%%%%%%%%%%%%%%%%%%%%%%%%%%%%%%%%%

%%%%%%%%%%%%%%%%%%%%%%%%%%%%%%%%%%%%%%%%%%
\subsection{Background}
%%%%%%%%%%%%%%%%%%%%%%%%%%%%%%%%%%%%%%%%%%
Anderson extrapolation is designed to accelerate  the convergence of sequences based on fixed point linear iterations, that is:
\begin{equation}
    x^{(k + 1)} = T x^{(k)} + b \enspace ,
\end{equation}
where the \emph{iteration matrix} $T \in \bbR^{p \times p}$ has spectral radius $\rho(T) < 1$.
There exist two variants: offline and online, which we recall briefly.

\emph{Offline} extrapolation (\Cref{alg:offline,fig:aa_offline}), at iteration $k$, looks for a fixed point as an affine combination of the $k$ first iterates: $x_\text{e-off}^{(k)} = \sum_{1}^k c_i^{(k)} x^{(i - 1)}$, and solves for the coefficients $c^{(k)} \in \bbR^k$ as follows:
\begin{align}\label{pb:anderson}
    c^{(k)}
    &= \argmin_{\sum_1^k c_i = 1} \normin{\textstyle\sum\nolimits_{1}^k c_ix^{(i - 1)} - T\textstyle\sum\nolimits_{1}^k c_i x^{(i - 1)} - b}^2
    \nonumber\\
    &= \argmin_{\sum_1^k c_i = 1}  \normin{\textstyle\sum\nolimits_1^k c_i \big(x^{(i)} - x^{(i - 1)}\big)}^2
    \nonumber \\
    &= (U^\top U)^{-1} \mathbf{1}_k / \mathbf{1}_k^\top (U^\top U)^{-1} \mathbf{1}_k
    \enspace ,
\end{align}
where $U = [x^{(1)} - x^{(0)}, \ldots, x^{(k)} - x^{(k - 1)}] \in \bbR^{p \times k}$ (and hence the objective rewrites $\norm{Uc}^2$).
In practice, since $x^{(k)}$ is available when $c^{(k)}$ is computed, one uses $x_\text{e}^{(k)} = \sum_{1}^k c_i^{(k)} x^{(i)}$ instead of $\sum_{1}^k c_i^{(k)} x^{(i - 1)}$.
The motivation for introducing the coefficients $c^{(k)}$ is discussed in more depth after Prop. 6 in \citet{Massias_Vaiter_Salmon_Gramfort2019}, and details about the closed-form solution can be found in \citet[Lem. 2.4]{Scieur_dAspremont_Bach2016}.
In offline acceleration, more and more base iterates are used to produce the extrapolated point, but the extrapolation sequence does not affect the base sequence.
This may not scale well since it requires solving larger and larger linear systems.

A more practical variant is the \emph{online} version  (\Cref{alg:online,fig:aa_online}), considered in this paper.
The number of points to be extrapolated is fixed to $K$; $x^{(1)}, \ldots, x^{(K)}$ are computed normally with the fixed point iterations, but $x_\mathrm{e}^{(K)}$ is computed by extrapolating the iterates from $x^{(1)}$ to $x^{(K)}$, and $x^{(K)}$ is taken equal to $x_\text{e}^{(K)}$.
$K$ normal iterates are then computed from $x^{(K + 1)}$ to $x^{(2K)}$ then extrapolation is performed on these last $K$ iterates, etc.

\begin{remark}
    The proposed \emph{online} version (\Cref{alg:online,fig:aa_online}) slightly differs from the \emph{online} algorithms in \citet{Walker_Ni2011,Mai_Johansson_19}.
    For computational purposes the extrapolation is performed every $K$ step, see \Cref{sec:expes} for details.
\end{remark}

As we recall below, results on Anderson acceleration mainly concern fixed-point iterations with {symmetric} iteration matrices $T$, and results concerning non-symmetric iteration matrices are weaker \citep{Bollapragada2018}.
\citet[Thm 6.4]{Poon_Liang2020} do not assume that $T$ is symmetric, but only diagonalizable, which is still a strong requirement.
\begin{proposition}[{Symmetric $T$, \citealt{Scieur2019}}]\label{prop:acc_rate_sym}
    Let the iteration matrix $T$ be symmetric semi-definite positive, with spectral radius $\rho = \rho(T) < 1$.
    Let $x^*$ be the limit of the sequence $(x^{(k)})$.
    Let $ \zeta = (1 - \sqrt{1 - \rho}) / (1 + \sqrt{1- \rho})$.
    % Let $\zeta = \rho / (1 + \sqrt{1 - \rho})^2 < \rho$.
    Then the iterates of offline Anderson acceleration satisfy, with $B=(\Id - T)^2$:
    \begin{equation}
        \normin{x_{\text{e-off}}^{(k)} - x^*}_{B}
        \leq
        \tfrac{2\zeta^{k-1}}{1 + \zeta^{2(k-1)}} \normin{x^{(0)} - x^*}_{B} \enspace,
    \end{equation}
    and thus those of online extrapolation satisfy:
    \begin{equation}
        \normin{x_{\text{e-on}}^{(k)} - x^*}_{B}
        \leq
        \Big(
            \tfrac{2\zeta^{K-1}}{1 + \zeta^{2(K-1)}}
        \Big)^{k / K} \normin{x^{(0)} - x^*}_{B} \enspace.
    \end{equation}
\end{proposition}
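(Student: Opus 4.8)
The plan is to cast offline Anderson acceleration of the linear recursion $x^{(k+1)} = Tx^{(k)} + b$ as the classical polynomial minimax problem underlying the Chebyshev semi-iterative method, using the symmetry of $T$ at one crucial step, and then to obtain the online bound by composition over blocks. First I introduce the error $e^{(k)} := x^{(k)} - x^*$; since $x^* = Tx^* + b$, the base iteration gives $e^{(k)} = Te^{(k-1)} = T^k e^{(0)}$, so the increments are $x^{(i)} - x^{(i-1)} = T^{i-1}(T - \Id)e^{(0)}$, i.e. $U$ is a Krylov matrix generated by $v := (T-\Id)e^{(0)}$. Writing $p(t) = \sum_{i=1}^k c_i t^{i-1}$, the constraint $\sum_i c_i = 1$ is equivalent to $p(1)=1$, the correspondence $c \leftrightarrow p$ being a bijection onto polynomials of degree at most $k-1$ with $p(1)=1$; moreover $x_{\text{e-off}}^{(k)} - x^* = \sum_{i=1}^k c_i e^{(i-1)} = p(T)e^{(0)}$ while $\norm{Uc} = \norm{(\Id-T)p(T)e^{(0)}}$.

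The key step uses symmetry: since $T$ is symmetric, $\Id - T$, $p(T)$ and $B = (\Id-T)^2$ are symmetric and mutually commuting, so that $\norm{Uc}^2 = \normin{p(T)e^{(0)}}_B^2$. Hence the problem \eqref{pb:anderson} defining $c^{(k)}$ is \emph{exactly} the minimization of $\normin{x_{\text{e-off}}^{(k)} - x^*}_B^2$ over feasible $p$, i.e.
\[
    \normin{x_{\text{e-off}}^{(k)} - x^*}_B = \min_{\substack{\deg p \le k-1 \\ p(1)=1}} \normin{p(T)e^{(0)}}_B .
\]
For any feasible $p$ one has $\normin{p(T)e^{(0)}}_B = \norm{p(T)(\Id-T)e^{(0)}} \le \norm{p(T)}_2\,\normin{e^{(0)}}_B$, and since $T \in \bbS_{+}^p$ with $\rho(T) = \rho$ it is orthogonally diagonalizable with eigenvalues in $[0,\rho]$, whence $\norm{p(T)}_2 = \max_{\lambda \in \mathrm{spec}(T)}|p(\lambda)| \le \max_{\lambda \in [0,\rho]}|p(\lambda)|$. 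Taking $p$ to be the minimax polynomial on $[0,\rho]$ then gives $\normin{x_{\text{e-off}}^{(k)} - x^*}_B \le \big(\min_p \max_{[0,\rho]}|p|\big)\normin{e^{(0)}}_B$.

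It remains to evaluate this minimax value, a classical quantity \citep{Golub_Varga1961}: with $C_m$ the degree-$m$ Chebyshev polynomial of the first kind, $\min_{\deg p \le m,\, p(1)=1}\max_{[0,\rho]}|p| = 1/C_m\!\bigl(\tfrac{2-\rho}{\rho}\bigr)$, attained by the affinely rescaled $C_m$. An elementary computation --- set $\tfrac{2-\rho}{\rho} = \cosh\theta$, solve $\rho x^2 - 2(2-\rho)x + \rho = 0$ for $x = e^{-\theta}\in(0,1)$, and use $\rho = (1-\sqrt{1-\rho})(1+\sqrt{1-\rho})$ --- yields $e^{-\theta} = \zeta$, hence $C_{k-1}\!\bigl(\tfrac{2-\rho}{\rho}\bigr) = \cosh((k-1)\theta) = \tfrac12\big(\zeta^{-(k-1)} + \zeta^{k-1}\big)$ and therefore $1/C_{k-1}\!\bigl(\tfrac{2-\rho}{\rho}\bigr) = \tfrac{2\zeta^{k-1}}{1+\zeta^{2(k-1)}}$, which is the offline estimate. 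The online bound follows by induction over blocks: each block of the online scheme runs $K$ regular iterations and then extrapolates over exactly those iterates, so it is precisely \Cref{alg:offline} with $k = K$ started from the previous extrapolated point, and thus contracts $\normin{\cdot}_B$ by $\tfrac{2\zeta^{K-1}}{1+\zeta^{2(K-1)}}$; composing $k/K$ blocks gives that factor to the power $k/K$.

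The main obstacle is the identity $\norm{Uc}^2 = \normin{x_{\text{e-off}}^{(k)} - x^*}_B^2$, which is exactly where symmetry of $T$ is indispensable (commuting symmetric polynomials in $T$) and which fails in this clean form for merely non-symmetric, or even diagonalizable, $T$; everything downstream is the textbook Chebyshev argument, the only delicate part being the algebra turning $1/C_{k-1}(\tfrac{2-\rho}{\rho})$ into the explicit $\zeta$-expression. One minor point to dispatch is that \eqref{pb:anderson} presupposes $U^\top U$ invertible, equivalently $v, Tv, \dots, T^{k-1}v$ linearly independent, which holds for $k$ below the degree of the minimal polynomial of $T$ at $v$; otherwise the extrapolation coefficients are undefined and the statement concerns only such $k$.
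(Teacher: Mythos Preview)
Your proof is correct and follows essentially the same route as the paper's machinery: the paper does not prove \Cref{prop:acc_rate_sym} itself (it is cited from \citealt{Scieur2019}), but its proof of the pseudo-symmetric analogue, \Cref{prop:acc_rate_cdsym}, proceeds via the same two steps you use --- \Cref{lem:link_xe_res} identifies $\sum_i c_i(x^{(i)}-x^{(i-1)})$ with $(T-\Id)\bigl(\sum_i c_i x^{(i-1)} - x^*\bigr)$, and \Cref{lem:link_xe_x0} exploits the minimality of $c^*$ to upper-bound by any feasible $c$, then substitutes the Chebyshev weights and invokes \citet[Prop.~B.2]{Barre2020} for the bound $\tfrac{2\zeta^{k-1}}{1+\zeta^{2(k-1)}}$. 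Your explicit polynomial formulation ($p(1)=1$, $\norm{Uc}=\normin{p(T)e^{(0)}}_B$, minimax on $[0,\rho]$) is exactly this argument written out more transparently; the only cosmetic difference is that you compute the Chebyshev value $1/C_{k-1}\!\bigl(\tfrac{2-\rho}{\rho}\bigr)$ by hand rather than citing it.
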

\citet{Scieur_dAspremont_Bach2016} showed that the offline version in \Cref{prop:acc_rate_sym} matches the accelerated rate of the conjugate gradient \citep{Hestenes_Stiefel1952}.
As it states, gradient descent can be accelerated by Anderson extrapolation on quadratics.

%
%
%%%%%%%%%%%%%%%%%%%%%%%%%%%%%%%%%%%%%%%%%
\paragraph{Application to least squares}
%%%%%%%%%%%%%%%%%%%%%%%%%%%%%%%%%%%%%%%%%
%
The canonical application of Anderson extrapolation is gradient descent on least squares.
Consider a quadratic problem, with $b \in \bbR^p$, $H \in \bbS_{++}^{p}$ such that $0 \prec H \preceq L$ and $L > 0$:
\begin{problem}\label{pb:quadratic}
    x^* = \argmin_{x \in \bbR^p}
    \frac{1}{2} x^\top H x + \langle b, x \rangle
    \enspace .
\end{problem}
%
% with
A typical instance is overdetermined least squares with full-column rank design matrix $A \in \bbR^{n \times p}$, and observations $y \in \bbR^n$, such that $H = A^\top A$ and $b=-A^\top y$.
On \Cref{pb:quadratic} gradient descent with step size $1/L$ reads:
\begin{equation}\label{eq:gd_linear}
    x^{(k+1)}
    =
    \underbrace{
        \left(\Id_p - \tfrac{1}{L} H\right)
    }_{
        T^{\text{GD}} \in \bbS_{+}^p
        } x^{(k)}
    + (\underbrace{-b/L}_{b^\text{GD}})\enspace.
\end{equation}
Because they have this linear structure, iterates of gradient descent can benefit from Anderson acceleration, observing that the fixed point of $x \mapsto T^\GD x + b^\GD$ solves \eqref{pb:quadratic},
with $T^{\text{GD}} \in \bbS_{+}^p$.
Anderson acceleration of gradient descent has therefore been well-studied beyond the scope of Machine Learning \citep{Pulay1980,Eyert1996}.
However, on many Machine Learning problems, coordinate descent achieves far superior performance, and it is interesting to determine whether or not it can also benefit from Anderson extrapolation.
%
%%%%%%%%%%%%%%%%%%%%%%%%%%%%%%%%%%%%%%%%%%
\subsection{Linear iterations of coordinate descent}
%%%%%%%%%%%%%%%%%%%%%%%%%%%%%%%%%%%%%%%%%%
To apply Anderson acceleration to coordinate descent, we need to show that its iterates satisfy linear iterations as in \eqref{eq:gd_linear}.
An epoch of cyclic coordinate descent for \Cref{pb:quadratic} consists in updating the vector $x$ one coordinate at a time, sequentially, i.e. for $j = 1, \dots, p$:
\begin{align}\label{eq:cd_update}
    x_j \leftarrow x_j - \frac{1}{H_{jj}} (H_{j:} x + b_j) \enspace,
\end{align}
which can be rewritten, for $j = 1, \dots, p$:
\begin{align}
    x \leftarrow \left (
        \Id_p - \frac{e_j e_j^\top}{H_{jj}} H \right ) x
    - \frac{b_j}{H_{jj}} e_j \enspace.
\end{align}
Thus, for primal iterates, as observed by \citet[Sec. A.3]{bertrand2020implicit}, one full pass (updating coordinates from 1 to $p$) leads to a linear iteration:
\begin{align}
    x^{(k+1)}
    &=
    T^{\text{CD}} x^{(k)} + b^{\text{CD}}
    \enspace ,
    \label{eq:cd}
\end{align}
with $T^{\text{CD}} = \Big( \Id_p - \frac{e_p e_p^\top}{ H_{pp} } H   \Big)
        \dots \Big( \Id_p - \tfrac{e_1 e_1^\top}{ H_{11}} H   \Big)$.
Note that in the case of coordinate descent we write $x^{(k)}$ for the iterates after one pass of coordinate descent on all features, and not after each update \eqref{eq:cd_update}.
The iterates of coordinate therefore also have a fixed-point structure, but contrary to gradient descent, their iteration matrix $T^{\text{CD}}$ is not symmetric, which we address in \Cref{sub:non_symmetric}.

%
%%%%%%%%%%%%%%%%%%%%%%%%%%%%%%%%%%%%%%%%%%%%%%%%%%%%%%%%%%%%%%%%%%%%%%
\subsection{Anderson extrapolation for nonsymmetric iteration matrices}\label{sub:non_symmetric}
%%%%%%%%%%%%%%%%%%%%%%%%%%%%%%%%%%%%%%%%%%%%%%%%%%%%%%%%%%%%%%%%%%%%%%
%
%
%
Even on quadratics, Anderson acceleration with non-symmetric iteration matrices is less developed, and the only results concerning its theoretical acceleration are recent and weaker than in the symmetric case.
\begin{proposition}[{\citealt[Thm 2.2]{Bollapragada2018}}]
\label{prop:acc_rate_nonsym}
When $T$ is not symmetric, and $\rho(T) < 1$,
\begin{align*}
    &\normin{x_\text{e-off}^{(k)} - T x_\text{e-off}^{(k)} - b} \leq  \nonumber\\
    &\quad \norm{\Id - \rho(T - \Id)}_2
    \normin{P^*(T)(x^{(1)} - x^{(0)})} \enspace,
\end{align*}
\sloppy
where the unavailable polynomial $P^*$ minimizes $\norm{P(T)(x^{(1)} - x^{(0)})}$ amongst all polynomials $P$ of degree exactly $k - 1$ whose coefficients sum to 1.
\end{proposition}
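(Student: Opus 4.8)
The plan is to mimic the classical analysis of polynomial acceleration methods, specializing to the structure imposed by the affine-combination constraint in \eqref{pb:anderson}. First I would observe that, because the coefficients $c^{(k)}$ sum to $1$, the residual of the extrapolated point can be written as a polynomial in $T$ applied to a fixed vector: writing $r^{(k)} = x_\text{e-off}^{(k)} - T x_\text{e-off}^{(k)} - b$ and using $x^{(i)} - T x^{(i)} - b = (\Id - T)(x^{(i)} - x^*)$ together with $x^{(i)} - x^* = T^i (x^{(0)} - x^*)$, one gets $r^{(k)} = (\Id - T) \sum_{i} c_i^{(k)} T^{i-1} (x^{(1)} - x^{(0)})$, i.e. $r^{(k)} = (\Id - T) P_{c^{(k)}}(T) (x^{(1)} - x^{(0)})$ where $P_{c^{(k)}}$ is the polynomial of degree $k-1$ whose coefficients are the $c_i^{(k)}$ and therefore sum to $1$. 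The key point is that the objective in \eqref{pb:anderson} is exactly $\normin{\sum c_i (x^{(i)} - x^{(i-1)})}^2 = \normin{P_{c}(T)(x^{(1)} - x^{(0)})}^2$ (using $x^{(i)} - x^{(i-1)} = T^{i-1}(x^{(1)} - x^{(0)})$), so the optimal $c^{(k)}$ minimizes $\normin{P(T)(x^{(1)} - x^{(0)})}$ over all degree-$(k-1)$ polynomials with coefficients summing to $1$ — this is precisely the $P^*$ in the statement.

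Next I would bound $\normin{r^{(k)}}$. We have $r^{(k)} = (\Id - T) P^*(T) (x^{(1)} - x^{(0)})$ by optimality, so $\normin{r^{(k)}} \le \norm{\Id - T}_2 \cdot \normin{P^*(T)(x^{(1)} - x^{(0)})}$ would be the crude estimate; to recover the sharper factor $\norm{\Id - \rho(T - \Id)}_2$ in the statement, I would instead use the fact that $P^*$ has degree \emph{exactly} $k-1$ with unit coefficient sum, and exploit a ``shift'' trick: among the competitors for minimizing $\normin{P(T)(x^{(1)}-x^{(0)})}$ we may as well factor out a normalization so that the leading behaviour near the eigenvalue $\rho$ (or rather the spectral structure) is controlled. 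Concretely, one writes $(\Id - T) = (1-\rho)\Id - (T - \rho \Id)$, or more simply notes that for any admissible $P$ the product $(\Id-T)P(T)$ can be re-expressed using a polynomial evaluated at $T$; bounding $\norm{(\Id - T)P^*(T)}$ by $\norm{\Id - \rho(T-\Id)}_2 \norm{P^*(T)}$ requires the algebraic identity $(\Id - T) = (\Id - \rho(T - \Id)) \cdot Q(T)$ for a suitable correction, which is where the factor comes from. I would follow \citet{Bollapragada2018} here and reproduce their manipulation rather than reinvent it.

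The main obstacle — and the reason the non-symmetric bound is genuinely weaker than the symmetric \Cref{prop:acc_rate_sym} — is that once we are left with $\normin{P^*(T)(x^{(1)}-x^{(0)})}$ we cannot further bound it by a clean Chebyshev-type quantity like $\tfrac{2\zeta^{k-1}}{1+\zeta^{2(k-1)}}$. For symmetric $T$ one diagonalizes and reduces to a scalar min-max polynomial problem on the spectrum $[0,\rho]$, whose solution is the shifted Chebyshev polynomial; for non-symmetric $T$ the operator norm $\norm{P(T)}$ is not controlled by $\max_{\lambda \in \sigma(T)}\abs{P(\lambda)}$ (the spectral radius is not submultiplicative with respect to polynomial functional calculus unless $T$ is normal or at least diagonalizable with bounded eigenvector condition number). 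So the statement deliberately stops at the implicit quantity $\normin{P^*(T)(x^{(1)}-x^{(0)})}$ without converting it to an explicit geometric rate. Thus my proof plan is: (i) rewrite the residual as $(\Id-T)P(T)(x^{(1)}-x^{(0)})$ for $P$ with unit-sum coefficients; (ii) identify the \eqref{pb:anderson} minimization with minimizing $\normin{P(T)(x^{(1)}-x^{(0)})}$, yielding $P^*$; (iii) apply the algebraic factorization to extract $\norm{\Id - \rho(T-\Id)}_2$ and conclude by submultiplicativity of the spectral norm. Steps (i)–(ii) are bookkeeping; step (iii) is the only place requiring care, and it is essentially the content of \citet[Thm 2.2]{Bollapragada2018}.
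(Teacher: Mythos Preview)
The paper does not supply a proof of \Cref{prop:acc_rate_nonsym}: the proposition is quoted verbatim from \citet[Thm~2.2]{Bollapragada2018} and is used only as context for the discussion of the numerical range in \Cref{sub:non_symmetric}. The appendix (\Cref{app:proofs}) proves only \Cref{prop:acc_rate_cdsym} and \Cref{prop:dl_smooth_case}. So there is no ``paper proof'' to compare your proposal against.

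On the substance of your sketch: steps (i)--(ii) are the right skeleton, but there is a bookkeeping slip. Using $x_\text{e-off}^{(k)} = \sum_i c_i x^{(i-1)}$ and $x^{(i)} - x^{(i-1)} = T^{i-1}(x^{(1)} - x^{(0)})$, the Anderson objective equals $\normin{P_c(T)(x^{(1)}-x^{(0)})}$ as you say; but the residual satisfies
\[
x_\text{e-off}^{(k)} - T x_\text{e-off}^{(k)} - b \;=\; (\Id - T)\!\sum_i c_i (x^{(i-1)} - x^*) \;=\; P_c(T)(\Id - T)(x^{(0)} - x^*) \;=\; -P_c(T)(x^{(1)} - x^{(0)}),
\]
so the extra factor $(\Id - T)$ in your expression for $r^{(k)}$ is not there --- it is already absorbed when you pass from $x^{(0)} - x^*$ to $x^{(1)} - x^{(0)}$. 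Consequently your step~(iii), where you try to manufacture the factor $\norm{\Id - \rho(T-\Id)}_2$ via an unspecified ``algebraic identity $(\Id - T) = (\Id - \rho(T-\Id))\cdot Q(T)$'', is both unnecessary and unjustified as written; you yourself acknowledge that you would simply defer to \citet{Bollapragada2018} at that point. If the goal is a self-contained argument, that step is the gap; if the goal is to match the paper, the paper itself defers entirely to the citation.
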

The quality of the bound (in particular, its eventual convergence to 0) crucially depends on $\normin{P(T)}$.
Using the Crouzeix conjecture \citep{Crouzeix2004} \citet{Bollapragada2018} managed to bound $\normin{P(T)}$, with $P$ a polynomial:
\begin{align}
    \normin{P(T)} \leq c \max_{z \in W(T)} |P(z)|
    \enspace ,
\end{align}
with $c \geq 2$ \citep{Crouzeix2007,Crouzeix_Palencia2017}, and $W(T)$ the numerical range:
\begin{align}
    W(T) \eqdef \{ x^* T x : \normin{x}_2 = 1, x \in \bbC^p \}
    \enspace .
\end{align}
\sloppy
Since there is no general formula for this bound, \citet{Bollapragada2018} used numerical bounds on  $W(T^q)$ to ensure convergence.
\Cref{fig:rayleigh} displays the numerical range $W(T^q)$ in the complex plane for $q \in \{1, 128, 256, 512 \}$.
In order to be able to apply the theoretical result from \citet{Bollapragada2018}, one must chose $q$ such that the point $(1, 0)$ is not contained in $W(T^q)$, and extrapolate $x^{(0)}, x^{(q)}, x^{(2q)}, \ldots$
One can see on \Cref{fig:rayleigh} that large values of $q$ are needed, unusable in practice: $q=512$ is greater than the number of iterations needed to converge on some problems.
Moreover, Anderson acceleration seems to provide speed up on coordinate descent even with $q=1$ as we perform, which highlights the need for refined bounds for Anderson acceleration on nonsymmetric matrices.

We propose two means to fix this lack of theoretical results: to modify the algorithm in order to have a more amenable iteration matrix (\Cref{sub:pseudosymcd}), or to perform a simple cost function decrease check (\Cref{sub:algorithm}).
%
%
%%%%%%%%%%%%%%%%%%%%%%%%%%%%%%%%%%%%%%%%%%%%%%%%%%%%%%%%
\begin{figure}[tb]
    \centering
    \includegraphics[height=28px]{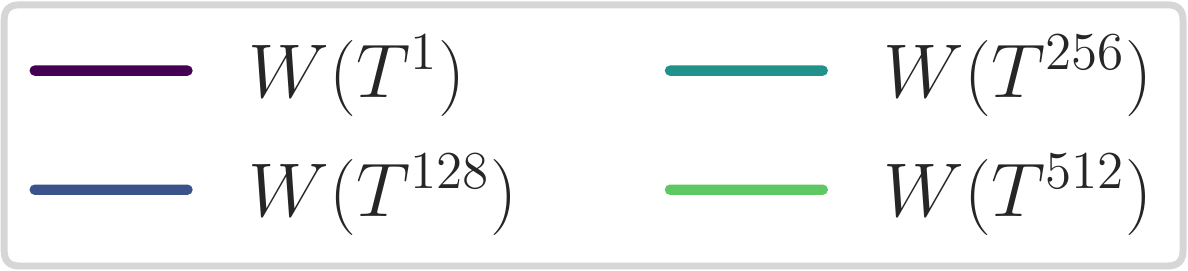}
    \includegraphics[width=\linewidth]{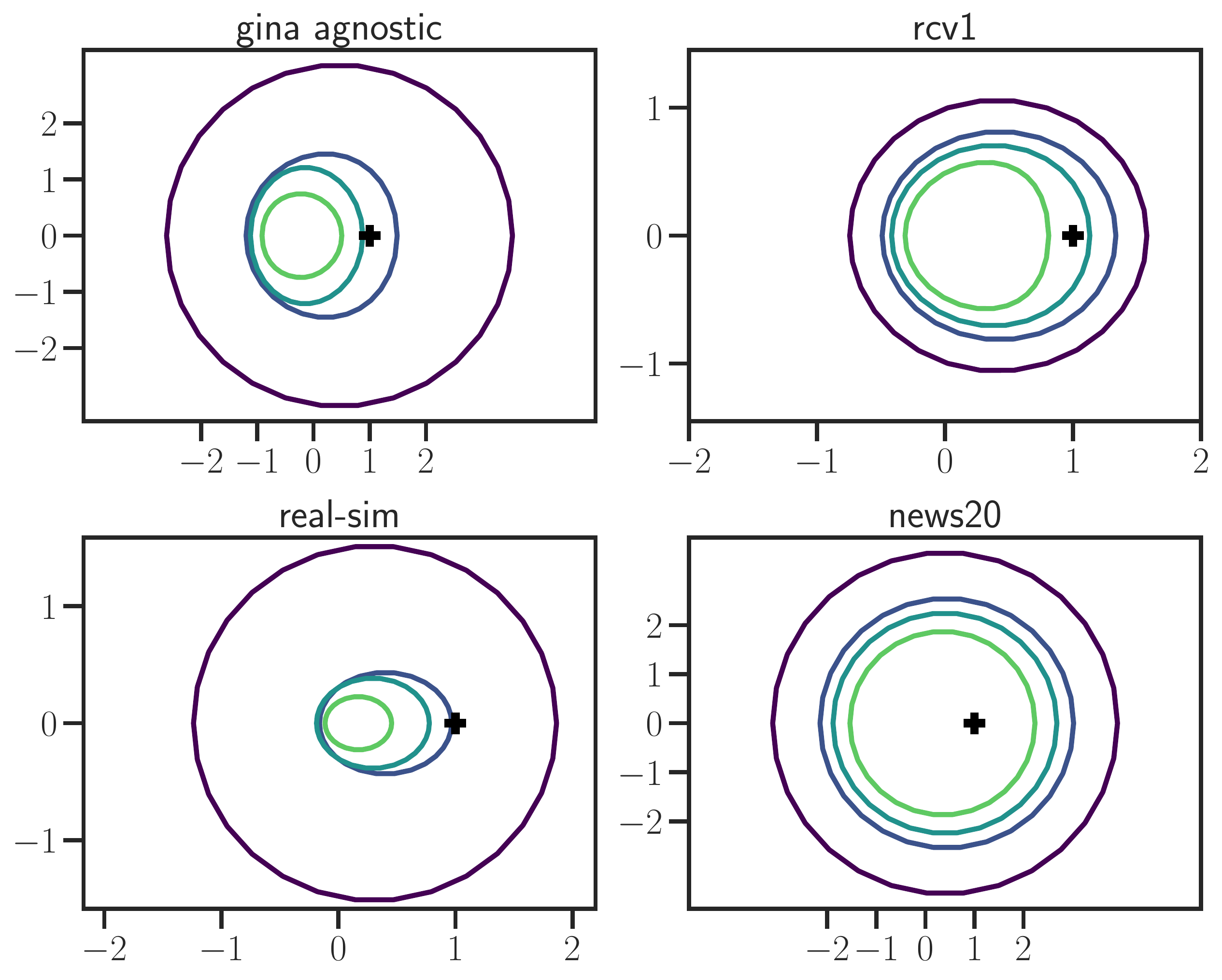}
\caption{%\textbf{Numerical range.}
Numerical range of $T^q$ as $q$ varies; $T$ is the iteration matrix of Ridge regression problems with conditioning $\kappa= 10^3$, on 4 datasets.
The black cross marks the $(1, 0)$ point, which should lie outside the range for the theoretical bound to be useful.}
\label{fig:rayleigh}
\end{figure}
%%%%%%%%%%%%%%%%%%%%%%%%%%%%%%%%%%%%%%%%%%%%%%%%%%%%%%%%

%%%%%%%%%%%%%%%%%%%%%%%%%%%%%%%%%%%%%%%%%%%%%%%%%%%%%%%%
\subsection{Pseudo-symmetrization of $T$}
\label{sub:pseudosymcd}
%%%%%%%%%%%%%%%%%%%%%%%%%%%%%%%%%%%%%%%%%%%%%%%%%%%%%%%%
%
%
A first idea to make coordinate descent theoretically amenable to extrapolation is to perform updates of coefficients from indices $1$ to $p$, followed by a reversed pass from $p$ to $1$.
This leads to an iteration matrix which is not symmetric either but friendlier: it writes
\begin{equation}\label{eq:T_pseudosym}
    T^{\text{CD-sym}} \eqdef H^{-1/2} S H^{1/2}
    \enspace ,
\end{equation}
with
\begin{align}\label{eq:S_pseudosym}
    S &= \Big( \Id_p - H^{1/2}\tfrac{e_1 e_1^\top}{ H_{11}} H^{\frac{1}{2}}   \Big)
    \times
    \dots
    \times
    \Big( \Id_p - H^{\frac{1}{2}} \tfrac{e_p e_p^\top}{ H_{pp} } H^{\frac{1}{2}} \Big)
    \nonumber
    \\
    &\quad \times
    \Big( \Id_p - H^{\frac{1}{2}} \tfrac{e_p e_p^\top}{ H_{pp} } H^{\frac{1}{2}}   \Big)
    \times
    \dots
    \times
    \Big( \Id_p - H^{\frac{1}{2}} \tfrac{e_1 e_1^\top}{ H_{11}} H^{\frac{1}{2}} \Big)
    \enspace .
\end{align}
$S$ is symmetric, thus, $S$ and $T$ (which has the same eigenvalues as $S$), are diagonalisable with real eigenvalues.
We call these iterations pseudo-symmetric, and show that this structure allows to preserve the guarantees of Anderson extrapolation.
% %
% %
\begin{proposition}[{Pseudosym. $T=H^{-1/2} S H^{1/2}$}]
    \label{prop:acc_rate_cdsym}
    Let $T$ be the iteration matrix of pseudo-symmetric coordinate descent: $T=H^{-1/2} S H^{1/2}$, with $S$ the symmetric positive semidefinite matrix of \eqref{eq:T_pseudosym}.
    Let $x^*$ be the limit of the sequence $(x^{(k)})$.
    % Let $\zeta = \rho / (1 + \sqrt{1 - \rho})^2 < \rho$ and $B = (T - \Id )^\top(T - \Id )$.
    Let $ \zeta = (1 - \sqrt{1 - \rho}) / (1 + \sqrt{1- \rho})$.
    Then $\rho = \rho(T) = \rho(S) < 1$ and the iterates of offline Anderson acceleration satisfy, with
    $B=( T - \Id)^\top ( T - \Id)$:
    \begin{equation}
        \normin{x_{\text{e-off}}^{(k)} - x^*}_{B}
        \leq
        \sqrt{\kappa(H)}
        \tfrac{2\zeta^{k-1}}{1 + \zeta^{2(k-1)}}
        \normin{x^{(0)} - x^*}_B
        \enspace,
    \end{equation}
    and thus those of online extrapolation satisfy:
    \begin{equation}
        \normin{x_{\text{e-on}}^{(k)} - x^*}_B
        \leq
        \Big(
            \sqrt{\kappa(H)}
            \tfrac{2\zeta^{K-1}}{1 + \zeta^{2(K-1)}} \Big)^{k / K}
        \normin{x^{(0)} - x^*}_B
        \enspace.
    \end{equation}
\end{proposition}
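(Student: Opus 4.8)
The plan is to reduce the pseudo-symmetric case to the already-established symmetric case of \Cref{prop:acc_rate_sym} by a change of variables. Since $T = H^{-1/2} S H^{1/2}$ with $S \in \bbS_+^p$, I would first observe that conjugating the fixed-point iteration $x^{(k+1)} = T x^{(k)} + b$ by $H^{1/2}$ yields a new sequence $y^{(k)} = H^{1/2} x^{(k)}$ satisfying $y^{(k+1)} = S y^{(k)} + H^{1/2} b$, which is a fixed-point iteration with a \emph{symmetric} semidefinite iteration matrix $S$, with $\rho(S) = \rho(T) < 1$. The key point is that Anderson extrapolation essentially commutes with this linear change of variables: the coefficients $c^{(k)}$ in \eqref{pb:anderson} depend only on the differences $x^{(i)} - x^{(i-1)}$, and while $H^{1/2}$ changes the matrix $U$ into $H^{1/2} U$ (hence changes the least-squares objective), the resulting affine combination still satisfies $\sum_i c_i = 1$. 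One needs to be slightly careful: the extrapolated point for the $y$-sequence solves $\argmin_{\sum c_i = 1} \|H^{1/2} U c\|^2$, which is \emph{not} the same $c^{(k)}$ as for the $x$-sequence. So the cleanest route is not to claim the extrapolation coefficients coincide, but rather to bound the $x$-error directly.

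Concretely, I would proceed as follows. Let $x^*$ be the fixed point, so $y^* = H^{1/2} x^*$ is the fixed point of the $S$-iteration. For the offline extrapolated point $x_{\text{e-off}}^{(k)} = \sum_i c_i^{(k)} x^{(i)}$ with $\sum_i c_i^{(k)} = 1$, we have $x_{\text{e-off}}^{(k)} - x^* = \sum_i c_i^{(k)} (x^{(i)} - x^*)$. Apply $T - \Id = H^{-1/2}(S - \Id) H^{1/2}$ to relate the $B$-norm on the $x$ side, $\normin{\cdot}_B$ with $B = (T-\Id)^\top(T-\Id)$, to the quantity $\normin{(S-\Id) H^{1/2}(x_{\text{e-off}}^{(k)} - x^*)}_2$. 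The residual $\normin{x_{\text{e-off}}^{(k)} - x^*}_B = \normin{(T - \Id)(x_{\text{e-off}}^{(k)} - x^*)}_2 = \normin{H^{-1/2}(S-\Id)(y_{\text{e}}^{(k)} - y^*)}_2 \le \normin{H^{-1/2}}_2 \normin{(S-\Id)(y_{\text{e}}^{(k)} - y^*)}_2$ where $y_{\text{e}}^{(k)} = \sum_i c_i^{(k)} y^{(i)}$. Now $y_{\text{e}}^{(k)}$ is \emph{an} admissible affine combination of the $y$-iterates but not necessarily the optimal one; however, since the optimal offline combination for the $y$-sequence minimizes exactly $\normin{(S - \Id)(\sum_i c_i y^{(i)} - y^*)}_2 = \normin{S \sum c_i y^{(i-1)} + H^{1/2}b - \sum c_i y^{(i-1)}}$ (using $\sum c_i = 1$ and shifting indices as in \eqref{pb:anderson}), the $x$-optimal coefficients $c^{(k)}$ give a value at least as large as the $y$-optimal ones. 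Wait --- that is the wrong direction. Instead I should run the argument the other way: the $x$-optimal $c^{(k)}$ minimizes $\normin{H^{1/2}(\ldots)}$-type residual; to get an upper bound I should plug the $x$-coefficients into the bound but then I cannot directly invoke \Cref{prop:acc_rate_sym}. So the correct move is: bound $\normin{x_{\text{e-off}}^{(k)} - x^*}_B$ from above by going through the \emph{$S$-optimal} extrapolation. Since $c^{(k)}$ for the $x$-sequence is the \emph{minimizer} of $\normin{(T-\Id)\sum c_i x^{(i-1)} - \ldots}^2 = \normin{U c}^2 = \normin{\sum c_i(x^{(i)}-x^{(i-1)})}^2$, and since $\normin{(T-\Id)(x_{\text{e-off}}^{(k)} - x^*)}_2 = \normin{U c^{(k)}}_2 \le \normin{U \tilde c}_2$ for any admissible $\tilde c$, in particular for $\tilde c$ = the $S$-optimal coefficients. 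Then $\normin{U \tilde c}_2 = \normin{H^{-1/2} \tilde U \tilde c}_2 \le \normin{H^{-1/2}}_2 \normin{\tilde U \tilde c}_2 = \normin{H^{-1/2}}_2 \normin{(S - \Id)(y_{\text{e-off}}^{(k), S} - y^*)}_2$, and this last term is controlled by \Cref{prop:acc_rate_sym} applied to $S$: it is at most $\normin{H^{-1/2}}_2 \cdot \frac{2\zeta^{k-1}}{1+\zeta^{2(k-1)}} \normin{(S-\Id)(y^{(0)} - y^*)}_2$. Finally convert back: $\normin{(S-\Id)(y^{(0)}-y^*)}_2 = \normin{(S-\Id)H^{1/2}(x^{(0)}-x^*)}_2 = \normin{H^{1/2}(T-\Id)(x^{(0)}-x^*)}_2 \le \normin{H^{1/2}}_2 \normin{x^{(0)}-x^*}_B$. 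Collecting the two operator-norm factors gives $\normin{H^{-1/2}}_2 \normin{H^{1/2}}_2 = \sqrt{\kappa(H)}$, which is exactly the stated constant.

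For the online version, the argument is the standard telescoping/iterated-contraction bootstrap: the online scheme resets the base sequence every $K$ steps to the extrapolated point, so applying the offline bound with $k = K$ on each window and composing $k/K$ times yields the stated power $(\sqrt{\kappa(H)}\,\frac{2\zeta^{K-1}}{1+\zeta^{2(K-1)}})^{k/K}$ — identical in form to the derivation in \Cref{prop:acc_rate_sym}, with the extra $\sqrt{\kappa(H)}$ factor carried through each window. I would also note at the outset the easy spectral facts: $S$ is a product of matrices of the form $\Id - H^{1/2}\frac{e_j e_j^\top}{H_{jj}} H^{1/2}$, each of which is an orthogonal projection's complement scaled appropriately (in fact $\Id - u u^\top/\|u\|^2$-type with $u = H^{1/2} e_j / \sqrt{H_{jj}}$, a genuine orthogonal projector since $\|u\|^2 = e_j^\top H e_j / H_{jj} = 1$), so each factor is symmetric with eigenvalues in $\{0,1\}$, hence $S$ — as a product of such — together with its symmetric "palindrome" structure $S = P_1 \cdots P_p P_p \cdots P_1$ is symmetric positive semidefinite with $\rho(S) \le 1$, and $\rho(S) < 1$ follows from $H \succ 0$ (the only fixed point of the full sweep is $x^*$). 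Since $T$ is similar to $S$, $\rho(T) = \rho(S) < 1$ as claimed.

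The main obstacle is getting the direction of the optimality argument right: one must exploit that the $x$-side Anderson coefficients \emph{minimize} the $x$-side residual $\normin{Uc}$, so that plugging in the (suboptimal for $x$, but optimal for the conjugated $S$-problem) coefficients only increases the bound, letting us invoke \Cref{prop:acc_rate_sym}; conflating the two sets of coefficients or running the inequality backwards gives a vacuous bound. The operator-norm bookkeeping with $H^{\pm 1/2}$ that produces the $\sqrt{\kappa(H)}$ is then routine, as is the online bootstrap.
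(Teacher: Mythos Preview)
Your proposal is correct and follows essentially the same route as the paper: both arguments use the minimality of the Anderson coefficients to replace $c^{(k)}$ by an arbitrary admissible $c$, then exploit the similarity $T = H^{-1/2} S H^{1/2}$ (which costs a factor $\sqrt{\kappa(H)}$ in operator norm) to reduce to a symmetric polynomial bound on $S$. The only cosmetic difference is that the paper plugs in the Chebyshev weights directly and cites the bound $\normin{\sum_i c_i^{\mathrm{Cb}} S^{i-1}} \le \tfrac{2\zeta^{k-1}}{1+\zeta^{2(k-1)}}$, whereas you introduce the conjugated sequence $y^{(k)} = H^{1/2} x^{(k)}$, take $\tilde c$ to be its Anderson-optimal coefficients, and invoke \Cref{prop:acc_rate_sym} as a black box---the same Chebyshev bound underlies both.
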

Proof of \Cref{prop:acc_rate_cdsym} can be found in \Cref{app:proofs}.
\Cref{prop:acc_rate_cdsym} shows %(asymptotic)
accelerated convergence rates for the offline Anderson acceleration, but a $\sqrt{\kappa(H)}$ appears in the rate of the online Anderson acceleration, meaning that $K$ must be large enough that $\zeta^K$ mitigates this effect.
This factor however seems like a theoretical artefact of the proof, since we observed significant speed up of the online Anderson acceleration, even with bad conditioning of $H$ (see \Cref{fig:cd_sym_ols_rcv1}).
% for instance).
%
\begin{figure}[tb]
    \centering
    \includegraphics[height=23px]{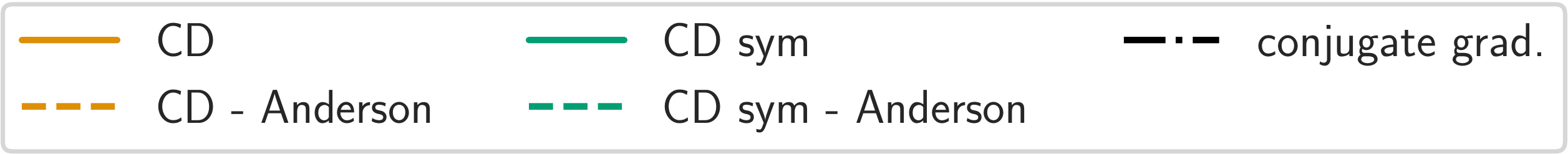}
    \includegraphics[width=\linewidth]{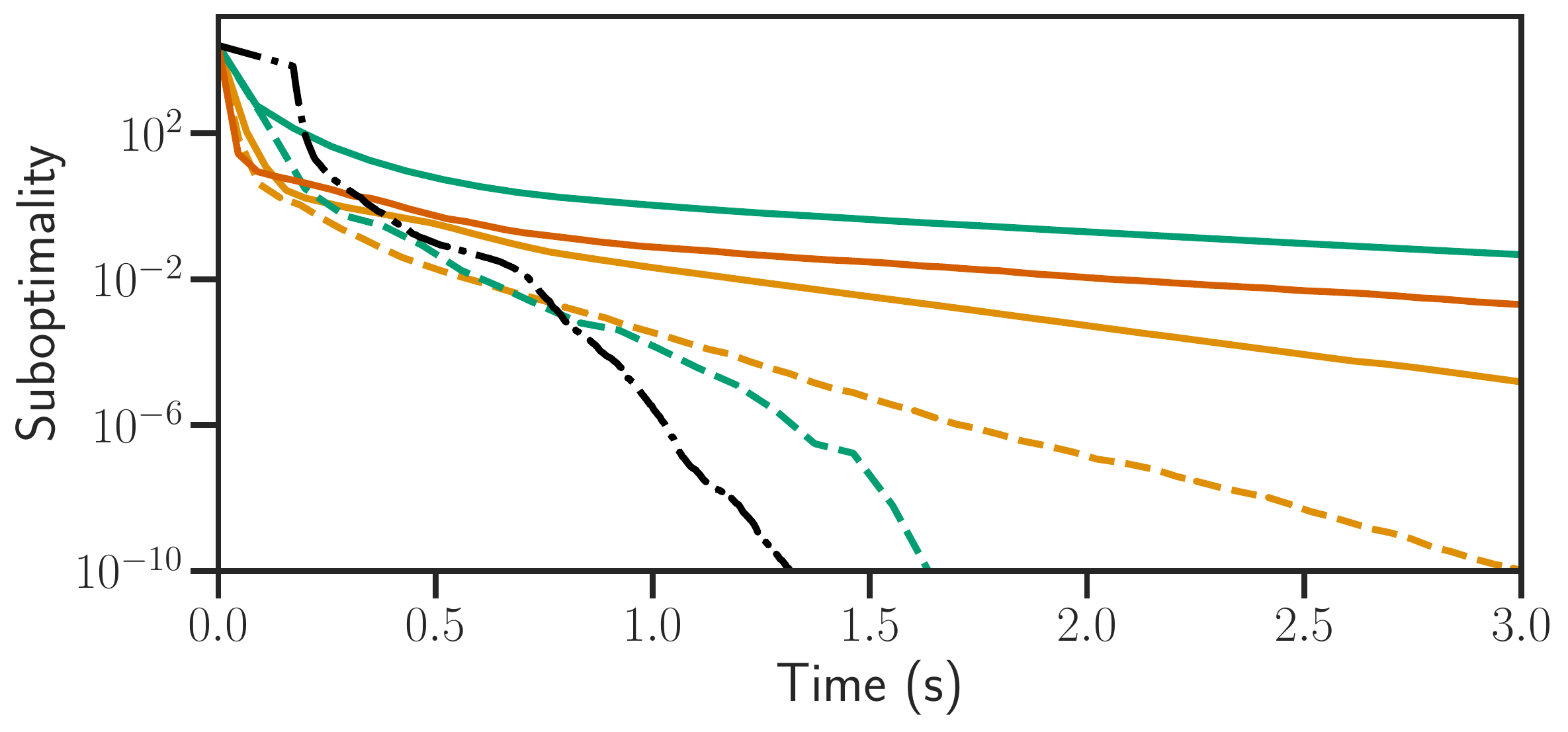}
\caption{\textbf{OLS, \emph{rcv1}.}
Suboptimality as a function of time on the $5000$ first columns of the dataset \emph{rcv1}.}
\label{fig:cd_sym_ols_rcv1}
\end{figure}

\Cref{fig:cd_sym_ols_rcv1} illustrates the convergence speed of cyclic and pseudo-symmetric coordinate descent on the \emph{rcv1} dataset.
Anderson acceleration provides speed up for both versions.
Interestingly, on this quadratic problem, the non extrapolated pseudo-symmetric iterations perform poorly, worse than cyclic coordinate descent.
However, the performances are reversed for their extrapolated counterparts: the pseudo-symmetrized version is better than the cyclic one (which has a nonsymmetric iteration matrix).
Finally, Anderson extrapolation on the pseudo-symmetrized version even reaches the conjugate gradient performance.
% %
%
%%%%%%%%%%%%%%%%%%%%%%%%%%%%%%%%%%%%%%%%%%%%%%%%%%%%%%%%%%%%%%%%%%%%%%
\subsection{Generalization to nonquadratic and proposed algorithm}
\label{sub:algorithm}
%%%%%%%%%%%%%%%%%%%%%%%%%%%%%%%%%%%%%%%%%%%%%%%%%%%%%%%%%%%%%%%%%%%%%%
%
\begin{algorithm}[t]
    \caption{Online Anderson PCD (proposed)}
    \label{alg:anderson_cd}
    \setcounter{AlgoLine}{0}
    \SetKwInOut{Init}{init}
    \Init{$x^{(0)} \in \bbR^p$}
    \For{$k = 1, \ldots $}{
            $x = x^{(k-1)}$

            \For{$j = 1, \ldots p$}{
                $\tilde{x}_j = x_j$
                % \tcp*[r]{trick efficient update}

                $x_j =
                \prox_{\frac{\lambda}{ L_j} g_j}
                (x_j - {A_{:j}^\top \nabla f(Ax)}/ {L_j} )$
                % \tcp*[r]{$\cO(n)$}

                $Ax \pluseq (x_j - \tilde{x}_j) A_{:j}$
                % \tcp*[r]{$\cO(n)$}
            }

            $x^{(k)} = x$  \tcp*[h]{regular iter. $\cO(np)$}

            \If(\tcp*[h]{extrapol., $\cO(K^3 + p K^2)$})
        {$k = 0 \quad \mathrm{mod} \, K$} {
            $U = [x^{(k - K + 1)} - x^{(k - K)}, \ldots, x^{(k)} - x^{(k - 1)}]$

            $c = (U^\top U)^{-1} \mathbf{1}_K / \mathbf{1}_K^\top (U^\top U)^{-1} \mathbf{1}_K \in \bbR^K$

            $x_{\mathrm{e}} = \sum_{i=1}^K c_i x^{(k - K + i)}$

            \If{$f(Ax_{\mathrm{e}}) + \lambda g(x_{\mathrm{e}}) \leq f(x^{(k)}) + \lambda g(x^{(k)})$}{
                $x^{(k)} = x_{\mathrm{e}}$\tcp*[r]{guaranteed convergence}
            }
        }
    }
    \Return{$x^{(k)}$}
\end{algorithm}
After devising and illustrating an Anderson extrapolated coordinate descent procedure for a simple quadratic objective, our goal is to apply Anderson acceleration on problems where coordinate descent achieve state-of-the-art results, \ie of the form:
\begin{align*}
    \min_{x \in \bbR^p}
    f(Ax) + \lambda g(x) \eqdef f(Ax) + \lambda \sum_{j=1}^p g_j(x_j)
    \enspace ,
\end{align*}
where $f: \bbR^n \to \bbR$ is convex, $\gamma$-smooth and $g_j$'s are proper, closed and convex functions
As examples, we allow $g = 0$, $g = \normin{x}_1$, $g = \tfrac12 \normin{x}_2^2$, $g = \normin{x}_1 + \frac{\rho}{2 \lambda} \normin{x}^2$.
In the nonquadratic case, for the proximal coordinate descent, following \citet{Klopfenstein2020}, the update of the $j$-st coordinate can be written $\psi_j : \bbR^p
\rightarrow \bbR^p$,
$x \mapsto x + (\prox_{\lambda g_{j} / L_{j}}
( x_{j} - \gamma_{j} \nabla_{j} f(x) ) - x_j )e_j $.
The nonlinear operator of one pass of coordinate descent (i.e. one pass on all the features) can thus be written: $\psi = \psi_p \circ \dots \circ \psi_1$.
One pass of proximal coordinate descent from $1$ to $p$ can be seen as a nonlinear fixed point iteration:
\begin{equation}
    x^{(k+1)} = \psi(x^{(k)})
    \enspace.
\end{equation}
In this case, $T$ is not a matrix, but a nonlinear operator.
However, as stated in \Cref{prop:dl_smooth_case}, asymptotically, this operator $T$ is linear.
\begin{proposition}\label{prop:dl_smooth_case}
    If $f$ is convex and smooth and $\mathcal{C}^2$, $g_j$ are convex smooth and $\mathcal{C}^2$, then
    $\psi$ is differentiable and
    \begin{align*}
        x^{(k+1)}
        = D \psi(x^*) (x^{(k)} - x^*) + x^*
        + o(\normin{x^{(k)} - x^{*}})
    \enspace .
    \end{align*}
\end{proposition}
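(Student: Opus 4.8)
The plan is to show that $\psi = \psi_p \circ \cdots \circ \psi_1$ is differentiable at the fixed point $x^*$ and then Taylor-expand it to first order. The statement is essentially just the first-order Taylor expansion of a differentiable map around its fixed point (using $\psi(x^*) = x^*$), so the content reduces entirely to establishing differentiability of $\psi$ and, more precisely, differentiability of each single-coordinate map $\psi_j$, followed by an application of the chain rule.

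First I would treat a single update $\psi_j : x \mapsto x + \big(\prox_{\lambda g_j / L_j}(x_j - \gamma_j \nabla_j f(x)) - x_j\big) e_j$. The only potentially non-smooth ingredient is the proximity operator $\prox_{\lambda g_j/L_j}$, which is a scalar function $\bbR \to \bbR$. Under the hypothesis that $g_j$ is convex, smooth and $\mathcal{C}^2$, I would argue that $\prox_{\lambda g_j/L_j}$ is itself $\mathcal{C}^1$: it is the inverse of the map $t \mapsto t + (\lambda/L_j) g_j'(t)$, which is $\mathcal{C}^1$ with derivative $1 + (\lambda/L_j) g_j''(t) \geq 1 > 0$, so by the inverse function theorem its inverse is $\mathcal{C}^1$ (indeed strictly monotone and globally defined since the map is a $\mathcal{C}^1$ bijection of $\bbR$). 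Hence the argument $x_j - \gamma_j \nabla_j f(x)$, which is $\mathcal{C}^1$ in $x$ because $f \in \mathcal{C}^2$, composes with a $\mathcal{C}^1$ scalar function, and $\psi_j$ is differentiable (in fact $\mathcal{C}^1$) on $\bbR^p$. I would write down $D\psi_j(x)$ explicitly as $\Id_p + e_j\big(\text{(scalar)} \cdot (e_j^\top - \gamma_j \nabla^2_{j,:} f(x)) - e_j^\top\big)$, though the exact form is not needed for the statement.

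Next, by the chain rule, $\psi = \psi_p \circ \cdots \circ \psi_1$ is differentiable with $D\psi(x^*) = D\psi_p(z_{p-1}^*) \cdots D\psi_1(x^*)$, where $z_i^*$ are the intermediate points of one pass started at $x^*$; since $x^*$ is a fixed point of the full pass and of each individual $\psi_j$ (the coordinate-wise optimality conditions), all these intermediate points coincide with $x^*$. Then differentiability of $\psi$ at $x^*$ together with $\psi(x^*) = x^*$ gives, by definition of the derivative, $\psi(x^{(k)}) = x^* + D\psi(x^*)(x^{(k)} - x^*) + o(\normin{x^{(k)} - x^*})$, which is exactly the claimed expansion since $x^{(k+1)} = \psi(x^{(k)})$.

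The main obstacle — really the only non-routine point — is justifying the differentiability of the scalar proximity operator and, relatedly, checking that each $\psi_j$ genuinely fixes $x^*$ so that the chain rule is evaluated at a single point; the smoothness-and-$\mathcal{C}^2$ hypothesis on $g_j$ is doing exactly this work, and one should be slightly careful because for the motivating examples ($g_j = |\cdot|$, the Lasso case) this hypothesis fails, so the proposition as stated is genuinely the smooth-regularizer statement and the non-smooth case would require an active-set/partial-smoothness argument that I would mention but not carry out here. Everything else is the definition of the Fréchet derivative plus the chain rule.
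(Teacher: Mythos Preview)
Your proposal is correct and follows essentially the same route as the paper: establish that each scalar proximity operator is $\mathcal{C}^1$ (the paper cites \citet{Gribonval_Nikolova2020}, while you give the self-contained inverse-function-theorem argument), deduce that each $\psi_j$ and hence $\psi = \psi_p \circ \cdots \circ \psi_1$ is differentiable, and conclude via the first-order Taylor expansion at the fixed point $x^*$. Your treatment is in fact slightly more careful than the paper's, since you explicitly verify that the intermediate points of the pass all coincide with $x^*$ (via coordinate-wise optimality) so that the chain rule is evaluated at a single point.
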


Therefore, iterations of proximal coordinate descent for this problem lead to noisy linear iterations.
Proof of \Cref{prop:dl_smooth_case} can be found in \Cref{app:proofs}.

\Cref{fig:cd_sym_l2_logreg} shows the performance of Anderson extrapolation on a $\ell^2$-regularised logistic regression problem:
\begin{equation}
    \argmin_{x \in \bbR^p}
    \sum_{i=1}^n \log(1 + e^{- y_i A_{i:} x})
    + \frac{\lambda}{2} \norm{x}_2^2
    \enspace .
\end{equation}
One can see that despite the better theoretical properties of the pseudo-symmetrized coordinate descent, Anderson acceleration on coordinate descent seems to work better on the cyclic coordinate descent.
We thus choose to apply Anderson extrapolation on the cyclic coordinate descent (\Cref{alg:anderson_cd}), while adding a step checking the decrease of the objective function in order to ensure convergence.
\begin{figure}[tb]
    \centering
        \centering
        \includegraphics[height=23px]{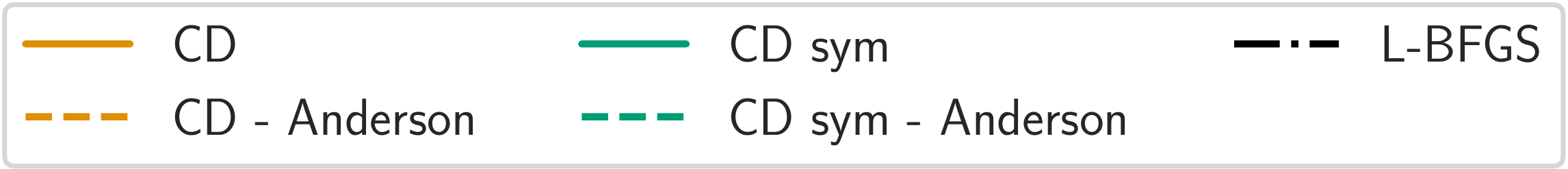}
        \includegraphics[width=\linewidth]{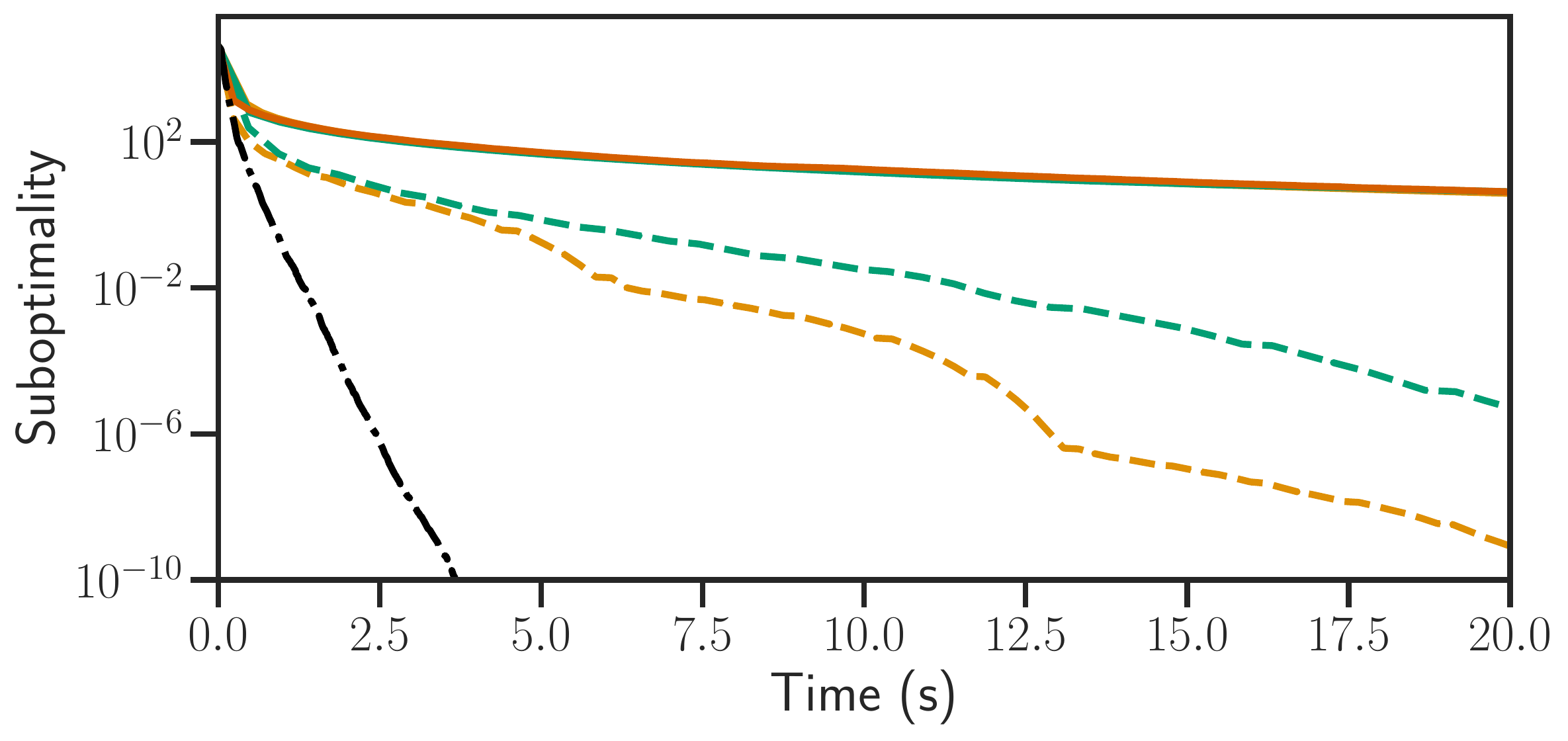}

    \caption{
        \textbf{$\ell_2$-regularised logistic regression, \emph{real-sim}.}
    Suboptimality as a function of time on the $2000$ first features of the \emph{real-sim} dataset, Tikhonov strength set so that $\kappa = 10^5$.
    }
    \label{fig:cd_sym_l2_logreg}
\end{figure}
Finally, we can also use \Cref{alg:anderson_cd} in the non smooth case where $g = \normin{\cdot}_1$,
since coordinate descent achieves support identification when the solution is unique, after which the objective becomes differentiable.
There is therefore a linear structure after a sufficient number of iterations \citep[Prop. 10]{Massias_Vaiter_Salmon_Gramfort2019}.

%!TEX root = ../extrapolcd.tex

%%%%%%%%%%%%%%%%%%%%%%%%%%%%%%%%%%%%%%%%%%%%%%%%%%%%%%%%%%%%%%%
\section{Experiments}\label{sec:expes}
%%%%%%%%%%%%%%%%%%%%%%%%%%%%%%%%%%%%%%%%%%%%%%%%%%%%%%%%%%%%%%%
%
%
An implementation relying on numpy, numba and cython \citep{harris2020array,numba,cython}, with scripts to reproduce the figures, is available at \url{https://mathurinm.github.io/andersoncd}

We first show how we set the hyperparameters of Anderson extrapolation (\Cref{sub:parameter_setting}).
Then we show that Anderson extrapolation applied to proximal coordinate descent outperforms other first order algorithms on standard Machine Learning problems (\Cref{sub:numerical_comparison}).

%%%%%%%%%%%%%%%%%%%%%%%%%%%%%%%%%%%%%%%%%%%%%%%%%%%%%%%%%%%%%%%
\subsection{Parameter setting}\label{sub:parameter_setting}
%%%%%%%%%%%%%%%%%%%%%%%%%%%%%%%%%%%%%%%%%%%%%%%%%%%%%%%%%%%%%%
%
Anderson extrapolation relies on $2$ hyperparameters: the number of extrapolated points $K$, and the amount of regularization eventually used when solving
the linear system to obtain the coefficients $c \in \bbR^K$.
Based on the conclusions of this section, we fix these parameters for all the subsequent experiments in \Cref{sub:numerical_comparison}: \emph{no regularization and $K=5$}.
%
%
%%%%%%%%%%%%%%%%%%%%%%%%%%%%%%%%%%%%%%%%%%%%%%%%%%%%%%%%%%%%%%%
\paragraph{Influence of the regularization.}
%%%%%%%%%%%%%%%%%%%%%%%%%%%%%%%%%%%%%%%%%%%%%%%%%%%%%%%%%%%%%%%
%%%%%%%%%%%%%%%%%%%%%%%%%%%%%%%%%%%%%%%%%%%%%%%%%%%%%%%%%%%%%%%
\begin{figure}[t]
    \centering
    \includegraphics[height=27px]{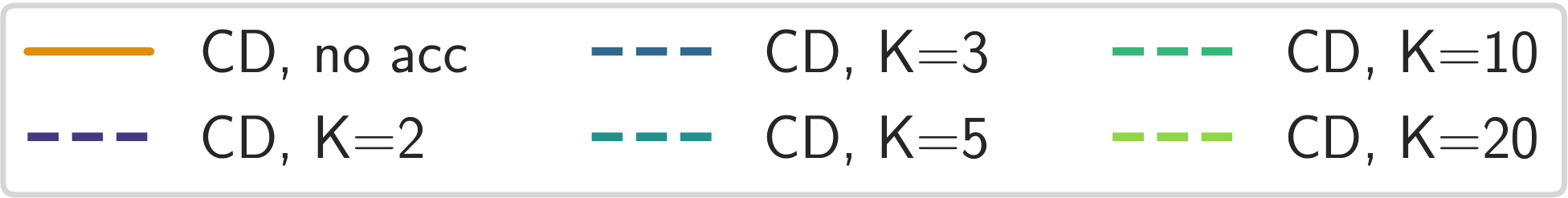}
    \includegraphics[height=110px]{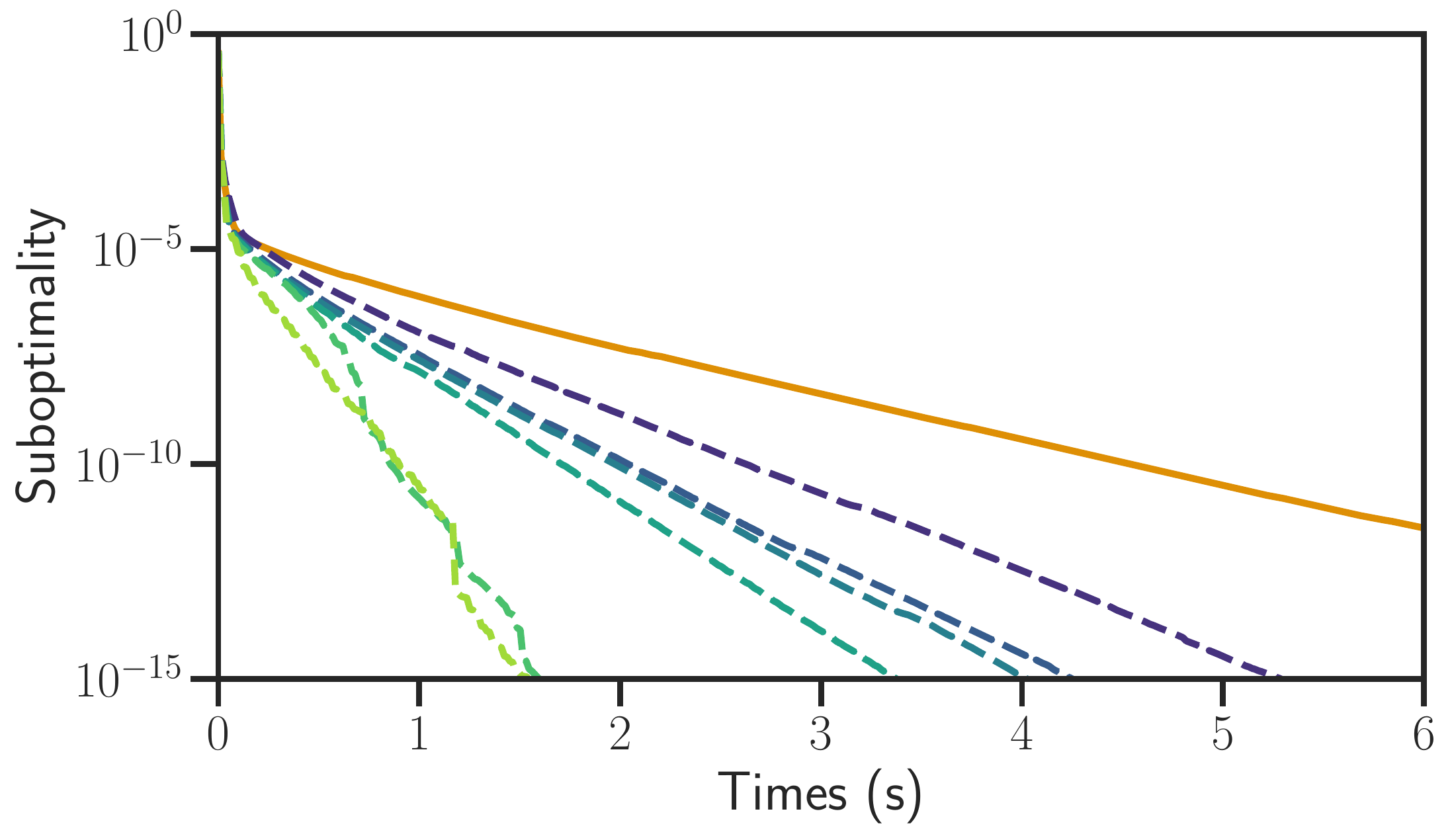}
    \caption{\textbf{Influence of $K$, quadratic, \emph{rcv1}.}
    Influence of the number of iterates $K$ used to perform Anderson extrapolation with coordinate descent (CD) on a quadratic with the \emph{rcv1} dataset ($2000$ first columns).}
    \label{fig:influenceK}
\end{figure}
%%%%%%%%%%%%%%%%%%%%%%%%%%%%%%%%%%%%%%%%%%%%%%%%%%%%%%%%%%%%%%%
\sloppy
\citet{Scieur_dAspremont_Bach2016} provided accelerated complexity rates for \emph{regularized} Anderson extrapolation: a term $\lambda_{\text{reg}} \norm{c}^2$ is added to the objective of \Cref{pb:anderson}.
The closed-form formula for the coefficients is then $(U^\top U + \lambda_\text{reg} \Id_K)^{-1} \mathbf{1}_K / \mathbf{1}_K^\top (U^\top U + \lambda_\text{reg} \Id_K)^{-1} \mathbf{1}_K$.

However, similarly to \cite{Mai_Johansson_19} and \citet{Poon_Liang2020} we observed that regularizing the linear system does not seem necessary, and can even hurt the convergence speed.
\Cref{fig:influence_reg_logreg} shows the influence of the regularization parameter on the convergence on the \emph{rcv1} dataset for a sparse logistic regression problem, with $K=5$ and  $\lambda = \lambda_{\max} / 30$.
The more the optimization problem is regularized, the more the convergence speed is deteriorated.
Thus \emph{we choose not to regularize} when solving the linear system for the extrapolation coefficients.
We simply check if the extrapolated point yields a lower objective function than the current regular iterate (see \Cref{alg:anderson_cd}).
%
%%%%%%%%%%%%%%%%%%%%%%%%%%%%%%%%%%%%%%%%%%%%%%%%%%%%%%%%%%%%%%%
\paragraph{Influence of $K$.}
%%%%%%%%%%%%%%%%%%%%%%%%%%%%%%%%%%%%%%%%%%%%%%%%%%%%%%%%%%%%%%%
%
\begin{figure}[tb]
    \centering
    \includegraphics[height=27px]{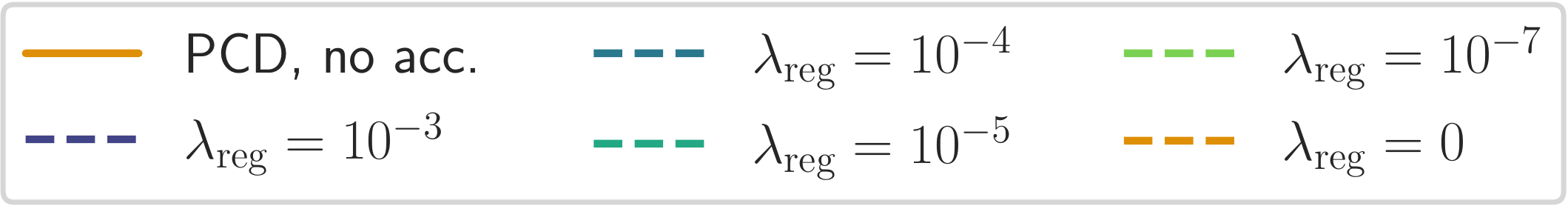}
    \includegraphics[height=110px]{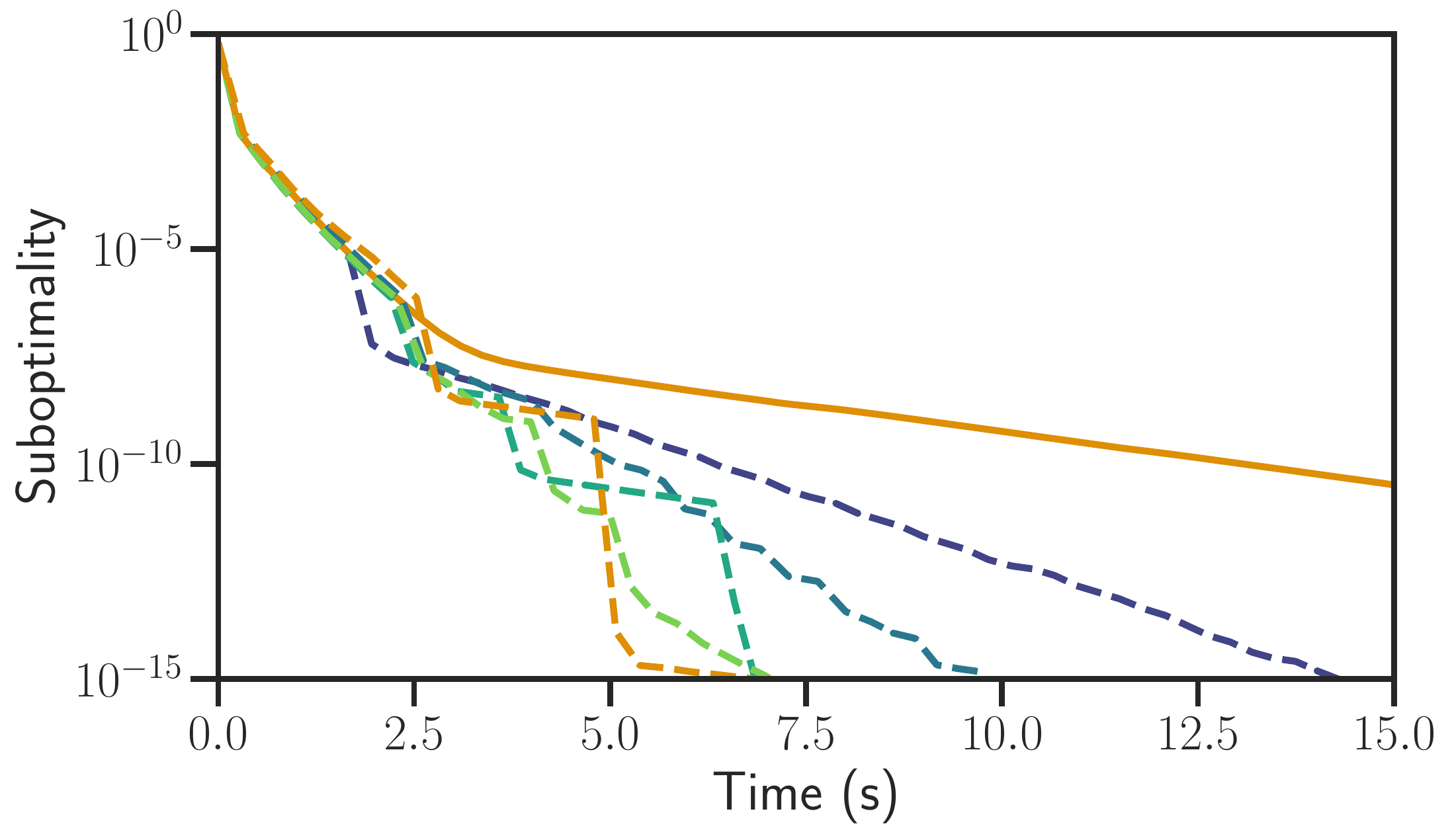}
    \caption{\textbf{Influence of
    $\lambda_{\mathrm{reg}}$, sparse logitic regression, \emph{rcv1}.}
    Influence of the regularization amount when solving a sparse logistic regression using Anderson extrapolation with proximal coordinate descent (PCD) on the \emph{rcv1} dataset, $K=5$, $\lambda = \lambda_{\max} / 30$.}
    \label{fig:influence_reg_logreg}
\end{figure}
%
%%%%%%%%%%%%%%%%%%%%%%%%%%%%%%%%%%%%%%%%%%%%%%%%%%%%%%%%%%%%%%%
%
\def\legendsize{0.7}
\def\widthlegend{0.75}
%%%%%%%%%%%%%%%%%%%%%%%%%%%%%%%%%%%%%%%%%%%%%%%%%%%%%%%%%%%%%%%
\begin{figure*}[tb]
    \centering
        \centering
        \includegraphics[width=0.7\linewidth]{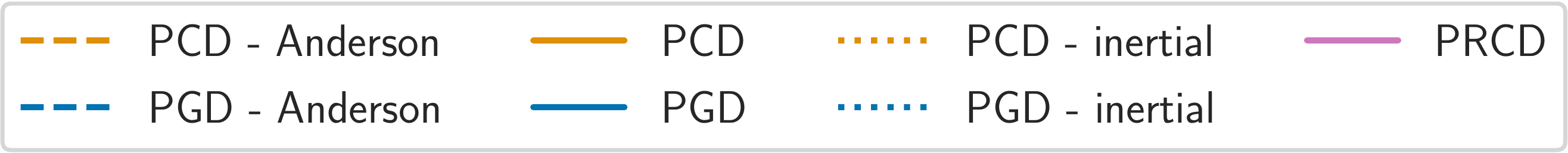}
        \includegraphics[width=1\linewidth]{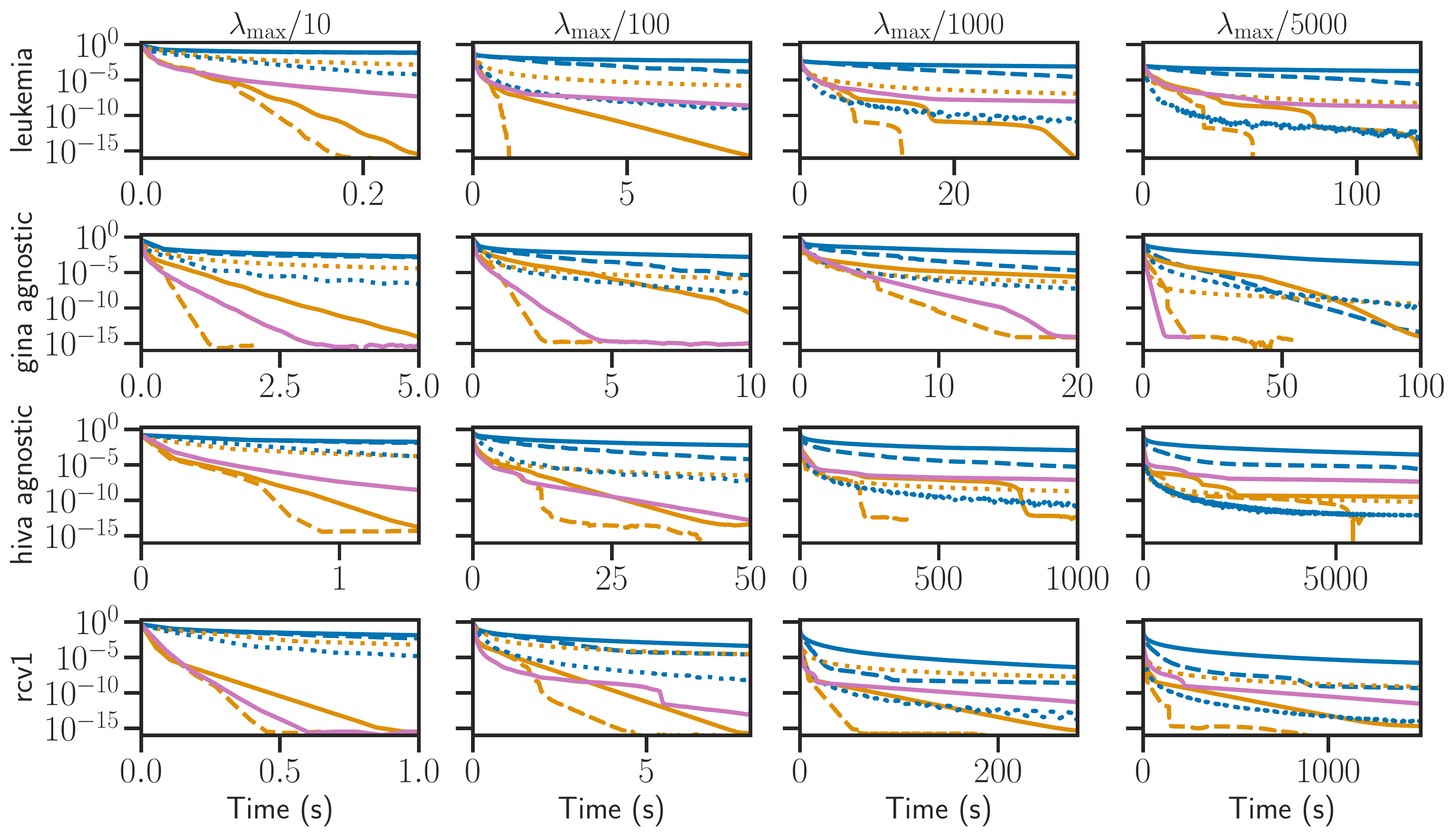}
    \caption{\textbf{Lasso, suboptimality.}
    Suboptimality as a function of time for the Lasso on multiple datasets and values of $\lambda$.}
    \label{fig:lasso_energies}
\end{figure*}
%%%%%%%%%%%%%%%%%%%%%%%%%%%%%%%%%%%%%%%%%%%%%%%%%%%%%%%%%%%%%%%
%
\Cref{fig:influenceK} shows the impact of $K$ on the convergence speed.
Although the performance depends on $K$, it seems that the dependency is loose, as for $K \in \{10, 20\}$ the acceleration is roughly the same.
Therefore, we do not treat $K$ as a parameter and fix it to $K=5$.
%
%%%%%%%%%%%%%%%%%%%%%%%%%%%%%%%%%%%%%%%%%%%%%%%%%%%%%%%%%%%%%%%
\paragraph{Computational overhead of Anderson extrapolation.}
%%%%%%%%%%%%%%%%%%%%%%%%%%%%%%%%%%%%%%%%%%%%%%%%%%%%%%%%%%%%%%%
With $\text{nnz}$ the number of nonzero coefficients, $K$ epochs (\ie $K$  updates of all coordinates) of CD \emph{without Anderson acceleration} cost:
\begin{align*}
    K \text{nnz}(A)
    \enspace .
\end{align*}
Every $K$ epochs, \Cref{alg:anderson_cd} requires to solve a $K \times K$ linear system.
Thus, $K$ epochs of CD \emph{with Anderson acceleration} cost:
\begin{align*}
    \underbrace{K \text{nnz}(A)}_{K \text{ passes of CD}} \hspace{1mm}
    + \hspace{3mm} \underbrace{K^2 \text{nnz}(w)}_{\text{form } U^\top U} \hspace{3mm}
    + \underbrace{K^3}_{\text{solve system}}
    \enspace .
\end{align*}
With our choice, $K=5$, the overhead of Anderson acceleration is marginal compared to a gradient call: $K^2 + K \text{nnz}(w) \ll \text{nnz}(A)$.
This can be observed in \Cref{fig:lasso_energies,fig:enet_energies,fig:logreg_energies}: even before acceleration actually occurs, Anderson PCD is not slower than regular PCD.
%
%%%%%%%%%%%%%%%%%%%%%%%%%%%%%%%%%%%%%%%%%%%%%%%%%%%%%%%%%%%%%%%
\def\widthfig{1}
\def\figsize{1}

%%%%%%%%%%%%%%%%%%%%%%%%%%%%%%%%%%%%%%%%%%%%%%%%%%%%%%%%%%%%%%%
\begin{figure*}[tb]
    \centering
        \centering
        \includegraphics[width=0.7\linewidth]{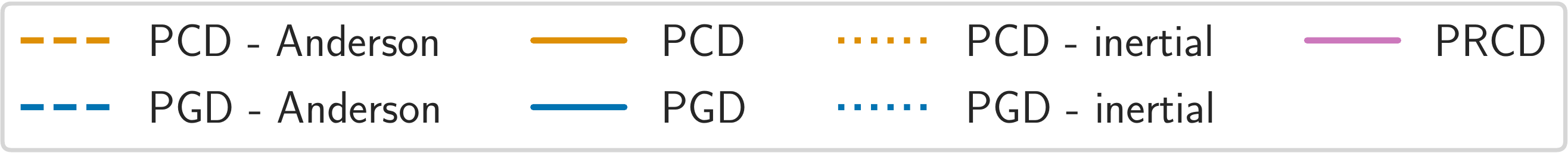}
        \includegraphics[width=\widthfig \linewidth]{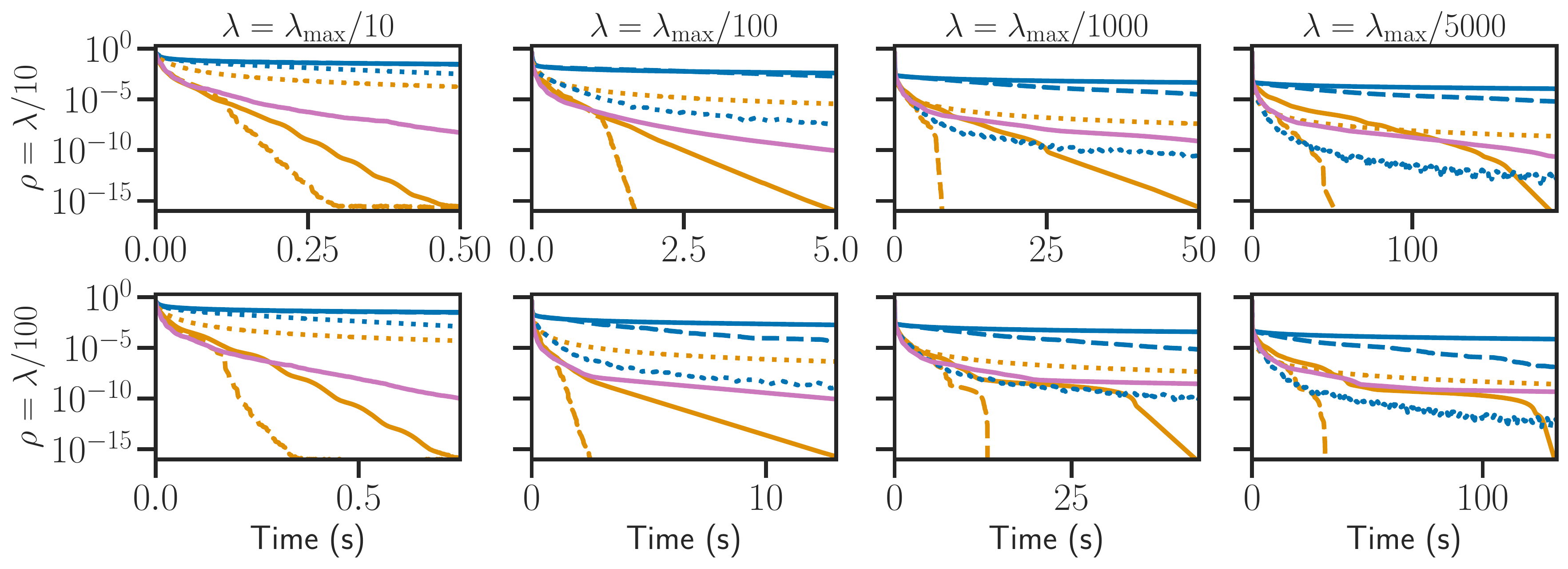}
    \caption{\textbf{Enet, suboptimality.}
    Suboptimality as a function of time for the elastic net on Leukemia dataset, for multiple values of $\lambda$ and $\rho$.
    }
    \label{fig:enet_energies}
\end{figure*}
%%%%%%%%%%%%%%%%%%%%%%%%%%%%%%%%%%%%%%%%%%%%%%%%%%%%%%%%%%%%%%%
%
\begin{figure*}[tb]
    \centering
        \centering
        \includegraphics[width=0.7\linewidth]{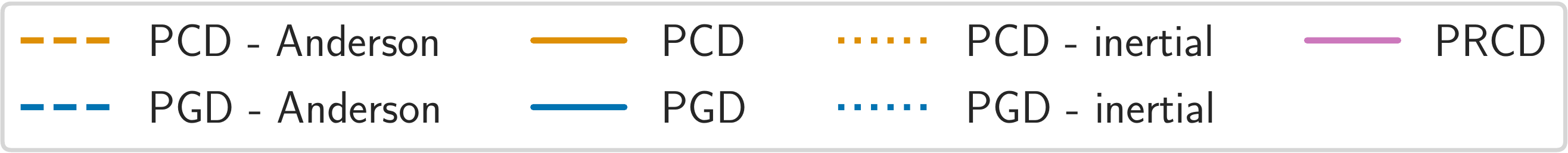}
        \includegraphics[width=\widthfig\linewidth]{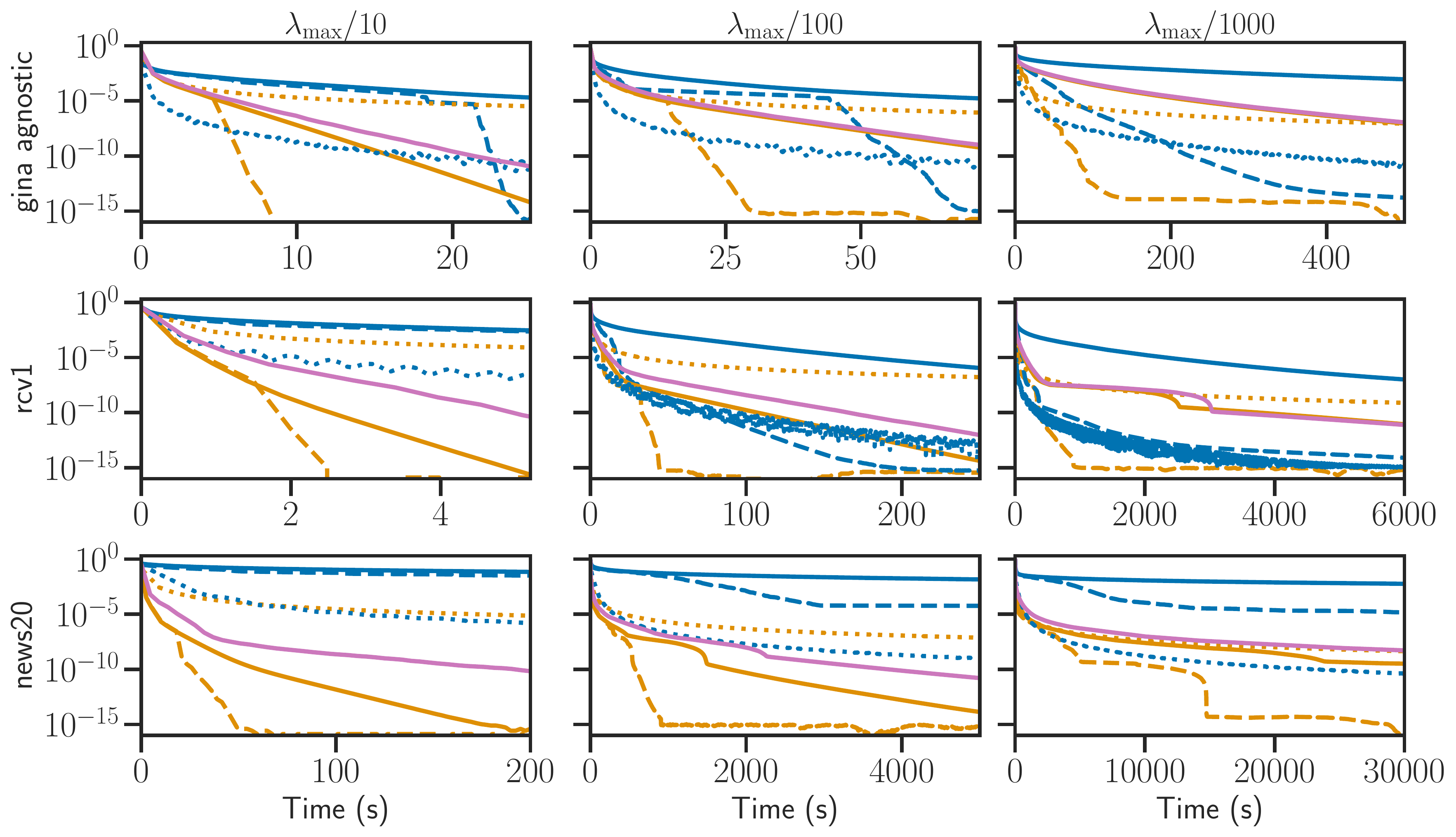}
    \caption{\textbf{$\ell_1$-regularised logistic regression, suboptimality.}
    Suboptimality as a function of time for $\ell_1$-regularized logistic regression on multiple datasets and values of $\lambda$.}
    \label{fig:logreg_energies}
  \end{figure*}
%%%%%%%%%%%%%%%%%%%%%%%%%%%%%%%%%%%%%%%%%%%%%%%%%%%%%%%%%%%%%%%
%%%%%%%%%%%%%%%%%%%%%%%%%%%%%%%%%%%%%%%%%%%%%%%%%%%%%%%%%%%%%%%
\subsection{Numerical comparison on Machine Learning problems}
\label{sub:numerical_comparison}
%%%%%%%%%%%%%%%%%%%%%%%%%%%%%%%%%%%%%%%%%%%%%%%%%%%%%%%%%%%%%%%
%
%
\sloppy
We compare multiple algorithms to solve popular Machine Learning problems: the Lasso, the elastic net, and sparse logistic regression (experiments on group Lasso are in \Cref{app:sub:group}).
The compared algorithms are the following:
\begin{itemize}
    \item Proximal gradient descent (PGD, \citealt{Combettes_Wajs2005}).
    \item Nesterov-like inertial PGD (FISTA, \citealt{Beck_Teboulle09}).
    \item Anderson accelerated PGD \citep{Mai_Johansson_19,Poon_Liang2020}.
    \item Proximal coordinate descent (PCD, \citealt{tseng2009coordinate}).
    \item Proximal coordinate descent with random index selection (PRCD, \citealt{Richtarik_Takac2014}).
    \item Inertial PCD \citep{Lin_Lu_Xiao_2014,Fercoq_Richtarik2015}.
    \item Anderson accelerated PCD (ours, \Cref{alg:anderson_cd}).
\end{itemize}
We use datasets from libsvm \citep{Fan_Chang_Hsieh_Wang_Lin08} and openml \citep{Feurer2019} (\Cref{tab:characteristics_datasets}), varying as much as possible to demonstrate the versatility of our approach.
We also vary the convergence metric: we use suboptimality in the main paper, while graphs measuring the duality gaps are in \Cref{app:additional_expe}.

\begin{table}[h!]
    \centering
    \caption{Datasets characteristics
    }
    \label{tab:characteristics_datasets}
    \begin{center}
    \begin{small}
    \begin{tabular}{lcccc}
        \toprule
        name   & $n$ & $p$ & density \\
        \midrule
        \emph{gina agnostic}      & $3468$    & $\num{970}$  & $1$   \\
        \emph{hiva agnostic}      & $4229$    & $\num{1617}$  & $1$   \\
        \emph{leukemia}      & $72$    & $\num{7129}$  & $1$   \\
        \emph{rcv1\_train} & $\num{20242}$ & $\num{19960}$  & $3.7 \, 10^{-3}$ \\
        \emph{real-sim} & $\num{72309}$ & $\num{20958}$  & $2.4 \, 10^{-3}$ \\
        \emph{news20}     & $\num{19996}$   & $\num{632983}$  & $6.1\, 10^{-4}$   \\
        \bottomrule
    \end{tabular}
    \end{small}
    \end{center}
\end{table}

%%%%%%%%%%%%%%%%%%%%%%%%%%%%%%%%%%%%%%%%%%%%%%%%%%%%%%%%%%%%%%%
\paragraph{Lasso.}
\Cref{fig:lasso_energies} shows the suboptimality $f(x^{(k)}) - f(x^*)$ of the algorithms on the Lasso problem:
\begin{equation}
    \argmin_{x \in \bbR^p}
    \frac{1}{2} \normin{y - A x}^2
    + \lambda \norm{x}_1
    \enspace ,
\end{equation}
as a function of time for multiple datasets and values of $\lambda$.
We parametrize $\lambda$ as a fraction of $\lambda_{\text{max}} = \normin{A^\top y}_\infty$, smallest regularization strength for which $x^* = 0$.
\Cref{fig:lasso_energies} highlights the superiority of proximal coordinate descent over proximal gradient descent for Lasso problems on real-world datasets, and the benefits of extrapolation for coordinate descent.
It shows that Anderson extrapolation can lead to a significant gain of performance.
In particular \Cref{fig:lasso_energies} shows that without restart, inertial coordinate descent \citep{Lin_Lu_Xiao_2014,Fercoq_Richtarik2015} can slow down the convergence, despite its  accelerated rate.
Note that the smaller the value of $\lambda$, the harder the optimization: when $\lambda$ decreases, more time is needed to reach a fixed suboptimality.
The smaller $\lambda$ is (\ie the harder the problem), the more efficient Anderson extrapolation is.

Other convergence metrics can be considered, since $f(x^*)$ is unknown to the practitioner: for the Lasso, it is also common to use the duality gap as a stopping criterion \citep{Massias_Gramfort_Salmon2018}.
Thus, for completeness, we provide \Cref{fig:lasso_gaps} in appendix, which shows the duality gap as a function of the number of iterations.
With this metric of convergence, Anderson PCD also
significantly
outperforms its competitors.
\paragraph{Elastic net.}
Anderson extrapolation is easy to extend to other estimators than the Lasso.
\Cref{fig:enet_energies} (and \Cref{fig:enet_gaps} in appendix) show the superiority of the Anderson extrapolation approach over proximal gradient descent and its accelerated version for the elastic net problem \citep{Zou_Hastie05}:
\begin{equation}
    \argmin_{x \in \bbR^p}
    \frac{1}{2n} \normin{y - A x}^2
    + \lambda \norm{x}_1
    + \frac{\rho}{2} \norm{x}_2^2
    \enspace .
\end{equation}
In particular, we observe that the more difficult the problem, the more useful the Anderson extrapolation: it is visible on \Cref{fig:enet_gaps,fig:enet_energies} that going from $\rho = \lambda/10$ to $\rho = \lambda/100$ lead to an increase in time to achieve similar suboptimality for the classical proximal coordinate descent, whereas the impact is more limited on the coordinate descent with Anderson extrapolation.

Finally, for a nonquadratic data-fit, here sparse logistic regression, we still demonstrate the applicability of extrapolated coordinate descent.
%
%%%%%%%%%%%%%%%%%%%%%%%%%%%%%%%%%%%%%%%%%%%%%%%%%%%%%%%%%%%%%%%
\paragraph{Sparse logistic regression.}
%%%%%%%%%%%%%%%%%%%%%%%%%%%%%%%%%%%%%%%%%%%%%%%%%%%%%%%%%%%%%%%
%
\Cref{fig:logreg_energies} represents the suboptimality as a function of time on a sparse logistic regression problem:
\begin{equation}
    \argmin_{x \in \bbR^p}
    \sum_{i=1}^n \log(1 + e^{- y_i A_{i:} x})
    + \lambda \norm{x}_1
    \enspace ,
\end{equation}
for multiple datasets and values of $\lambda$.
We parametrize $\lambda$ as a fraction of $\lambda_{\text{max}} = \normin{A^\top y}_\infty / 2$.
As for the Lasso and the elastic net, the smaller the value of $\lambda$, the harder the problem and Anderson CD outperforms its competitors.

\paragraph{Conclusion}
In this work, we have proposed to accelerate coordinate descent using Anderson extrapolation.
We have exploited the fixed point iterations followed by coordinate descent iterates on multiple Machine Learning problems to improve their convergence speed.
We have circumvented the non-symmetricity of the iteration matrices by proposing a pseudo-symmetric version for which accelerated convergence rates have been derived.
In practice, we have performed an extensive validation to demonstrate large benefits on multiple datasets and problems of interests.
For future works, the excellent performance of Anderson extrapolation for cyclic coordinate descent calls for a more refined analysis of the known bounds, through a better analysis of the spectrum and numerical range of the iteration matrices.

\section*{Acknowledgements}
Part of this work has been carried out at the Machine Learning Genoa (MaLGa) center, Universit\`a di Genova (IT).
M. M. acknowledges the financial support of the European
Research Council (grant SLING 819789).
This work was partially funded by the ERC Starting Grant SLAB ERC-StG-676943.
The authors thank Louis Béthune and Mathieu Blondel for pointing out the difference between the proposed \emph{online} algorithm and previous \emph{online} algorithms in the literature.

\clearpage

\bibliographystyle{plainnat}
\bibliography{biblio_extracd}

\clearpage
\onecolumn
\appendix

%!TEX root = ../extrapolcd.tex

%%%%%%%%%%%%%%%%%%%%%%%%%%%%%%%%%%%%%%%%%%%%%%%%%%%%%%%%%%%%%%%
\section{Additional experiments}
\label{app:additional_expe}
%%%%%%%%%%%%%%%%%%%%%%%%%%%%%%%%%%%%%%%%%%%%%%%%%%%%%%%%%%%%%%%

%%%%%%%%%%%%%%%%%%%%%%%%%%%%%%%%%%%%%%%%%%%%%%%%%%%%%%%%%%%%%%%
\subsection{Gaps as a function of time}
\label{app:subsec_gaps}
% %%%%%%%%%%%%%%%%%%%%%%%%%%%%%%%%%%%%%%%%%%%%%%%%%%%%%%%%%%%%%%%

In this section, we include the counterparts of \Cref{fig:lasso_energies,fig:enet_energies,fig:logreg_energies}, but display the duality gap instead of the suboptimality.
Indeed, since $x^*$ in not available in practice, the suboptimality cannot be used as a stopping criterion.
To create a dual feasible point, we use the classical technique of residual rescaling \citep{Mairal}.

\def\widthlegend{0.9}
\def\widthimag{0.9}

\def\widthlegend{0.7}
\def\widthimag{0.9}
%%%%%%%%%%%%%%%%%%%%%%%%%%%%%%%%%%%%%%%%%%%%%%%%%%%%%%%%%%%%%%%
%
\begin{figure}[H]
    \centering
        \centering
        \includegraphics[width=\widthlegend \linewidth]{energies_time_legend}
        \includegraphics[width=\widthimag \linewidth]{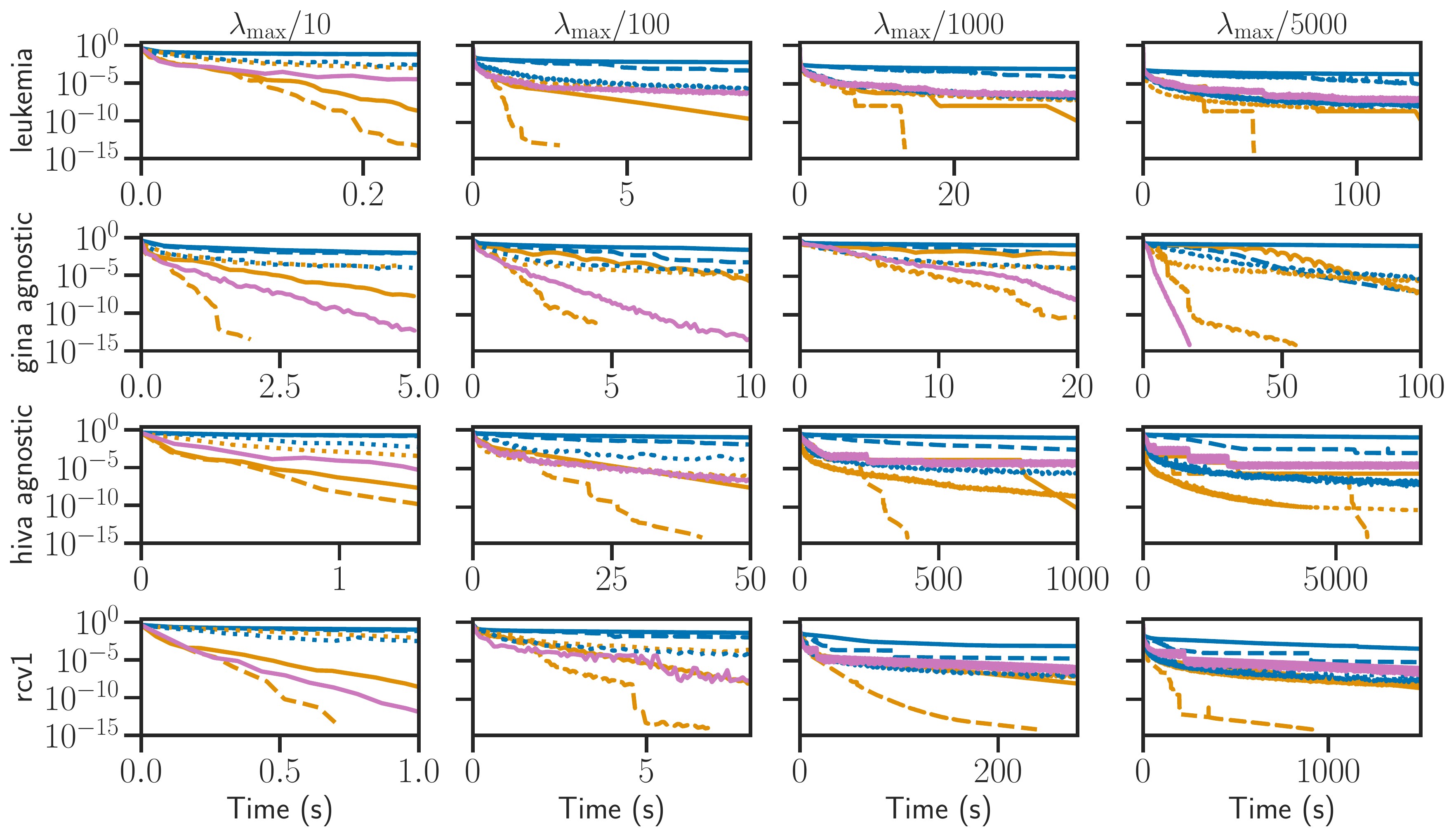}
    \caption{\textbf{Lasso, duality gap.} Duality gap along time for the Lasso on various datasets and values of $\lambda$.
    }
    \label{fig:lasso_gaps}
\end{figure}
%%%%%%%%%%%%%%%%%%%%%%%%%%%%%%%%%%%%%%%%%%%%%%%%%%%%%%%%%%%%%%%
%
%%%%%%%%%%%%%%%%%%%%%%%%%%%%%%%%%%%%%%%%%%%%%%%%%%%%%%%%%%%%%%%
% \subsection{Elastic net}
%%%%%%%%%%%%%%%%%%%%%%%%%%%%%%%%%%%%%%%%%%%%%%%%%%%%%%%%%%%%%%%
%
\begin{figure}[H]
    \centering
        \centering
        \includegraphics[width=\widthlegend \linewidth]{energies_enet_time_legend}
        \includegraphics[width=\widthimag  \linewidth]{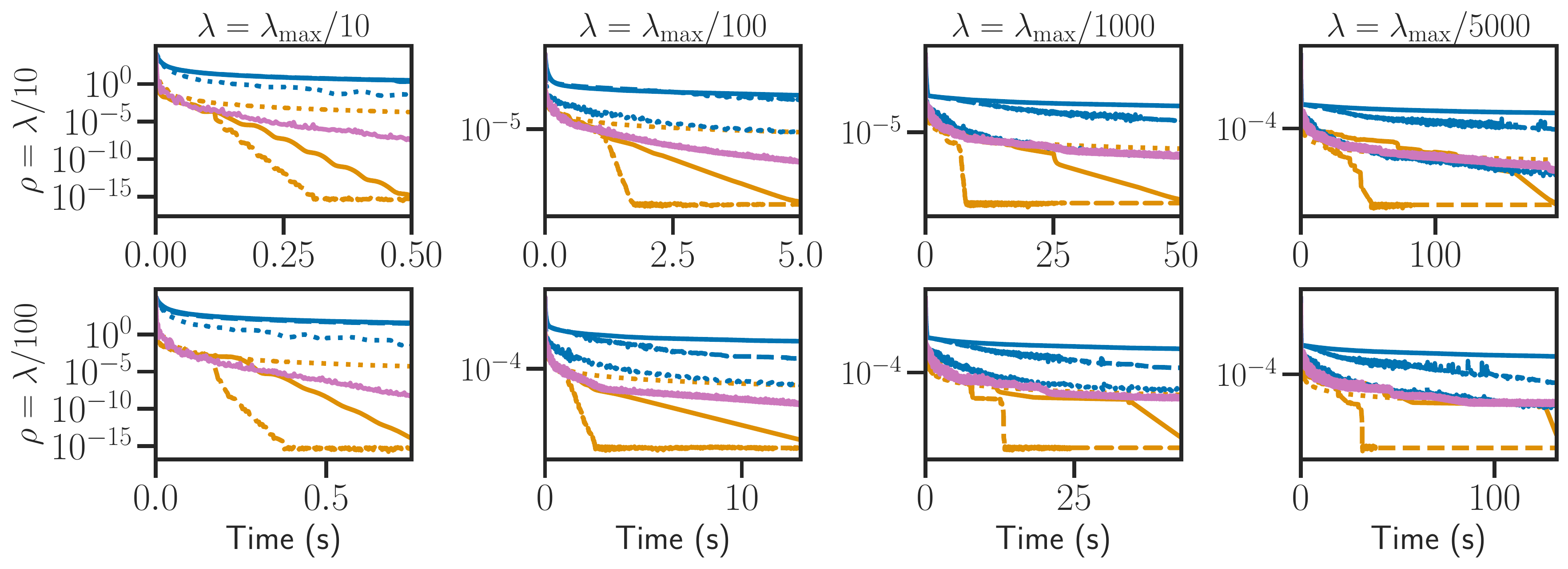}
    \caption{\textbf{Elastic net, duality gap.}
    Duality gap as a function of time for the elastic net on Leukemia dataset, for multiple values of $\lambda$ and $\rho$.
    }
    \label{fig:enet_gaps}
\end{figure}
%%%%%%%%%%%%%%%%%%%%%%%%%%%%%%%%%%%%%%%%%%%%%%%%%%%%%%%%%%%%%%%
%
%%%%%%%%%%%%%%%%%%%%%%%%%%%%%%%%%%%%%%%%%%%%%%%%%%%%%%%%%%%%%%%
% \subsection{Sparse logistic regression}
%%%%%%%%%%%%%%%%%%%%%%%%%%%%%%%%%%%%%%%%%%%%%%%%%%%%%%%%%%%%%%%
%
%%%%%%%%%%%%%%%%%%%%%%%%%%%%%%%%%%%%%%%%%%%%%%%%%%%%%%%%%%%%%%%
\begin{figure}[H]
    \centering
    \centering
    \includegraphics[width=\widthlegend \linewidth]{energies_logreg_time_legend}
    \includegraphics[width=0.8 \linewidth]{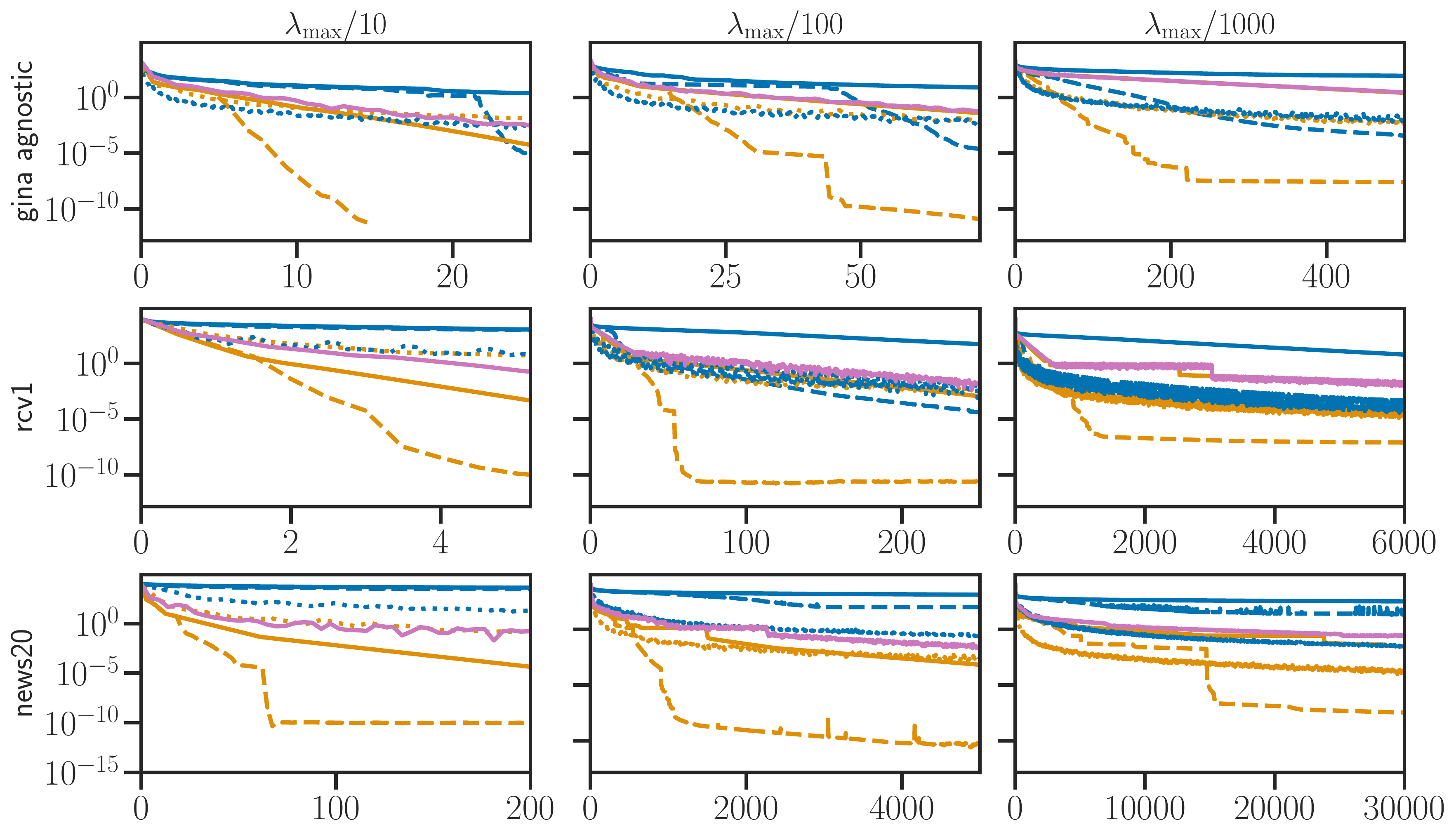}
    \caption{\textbf{$\ell_1$-regularised logistic regression, duality gap.} Duality gap as a function of time for $\ell_1$-regularized logistic regression on multiple datasets and values of $\lambda$.}
    \label{fig:logreg_dual_gaps}
\end{figure}
%%%%%%%%%%%%%%%%%%%%%%%%%%%%%%%%%%%%%%%%%%%%%%%%%%%%%%%%%%%%%%%

%%%%%%%%%%%%%%%%%%%%%%%%%%%%%%%%%%%%%%%%%%%%%%%%%%%%%%%%%%%%%%%
\subsection{Group Lasso}\label{app:sub:group}
%%%%%%%%%%%%%%%%%%%%%%%%%%%%%%%%%%%%%%%%%%%%%%%%%%%%%%%%%%%%%%%
%
In this section we consider the group Lasso, with a design matrix $A \in \bbR^{n \times p}$, a target $y \in \bbR^n$, and a partition $\cG$ of $[p]$ (elements of the partition being the disjoints groups):
\begin{align}
    \argmin_{x \in \bbR^p}
    \frac{1}{2}\norm{y - A x}^2
    + \lambda \sum_{g \in \cG} \norm{x_g}
    \enspace ,
\end{align}
where for $g \in \cG$, $x_g \in \bbR^{|g|}$ is the subvector of $x$ composed of coordinates in $g$.
the group Lasso can be solved via proximal gradient descent and by block coordinate descent (BCD), the latter being amenable to Anderson acceleration.
As \Cref{fig:group_subopt} shows, the superiority of Anderson accelerated block coordinate descent is on par with the one observed on the problems studied above.
%%%%%%%%%%%%%%%%%%%%%%%%%%%%%%%%%%%%%%%%%%%%%%%%%%%%%%%%%%%%%%%
\begin{figure}[H]
    \centering
    \centering
    \includegraphics[width=0.6 \linewidth]{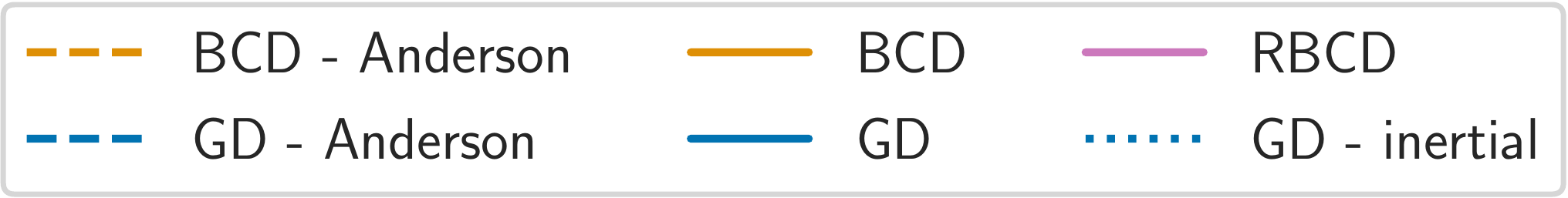}
    \includegraphics[width=0.6 \linewidth]{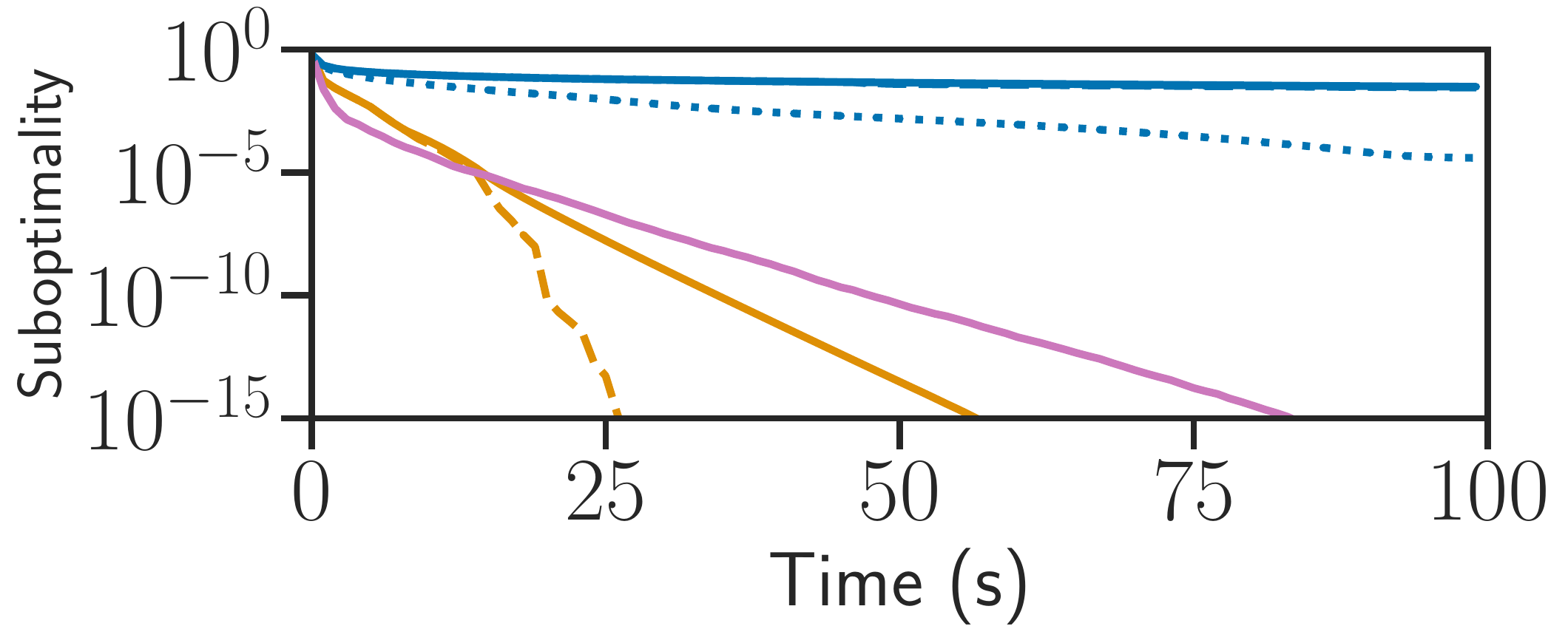}
    \caption{\textbf{Group Lasso, suboptimality.}
    Suboptimality as a function of time for the group Lasso on the \emph{Leukemia} dataset, $\lambda = \lambda_{\max} / 100$. Groups are artificially taken as consecutive blocks of 5 features.}
    \label{fig:group_subopt}
\end{figure}
%%%%%%%%%%%%%%%%%%%%%%%%%%%%%%%%%%%%%%%%%%%%%%%%%%%%%%%%%%%%%%%
%
%!TEX root = ../extrapolcd.tex

%%%%%%%%%%%%%%%%%%%%%%%%%%%%%%%%%%%%%%%%%%%%%%%%%%%%%%%%%%%%%%%
\section{Proofs of \Cref{prop:acc_rate_cdsym,prop:dl_smooth_case}}
\label{app:proofs}
%%%%%%%%%%%%%%%%%%%%%%%%%%%%%%%%%%%%%%%%%%%%%%%%%%%%%%%%%%%%%%%
%

%%%%%%%%%%%%%%%%%%%%%%%%%%%%%%%%%%%%%%%%%%%%%%%%%%%%%%%%%%%%%%%
\subsection{Proofs of \Cref{prop:acc_rate_cdsym}}
%%%%%%%%%%%%%%%%%%%%%%%%%%%%%%%%%%%%%%%%%%%%%%%%%%%%%%%%%%%%%%%
%
%
\begin{lemma}\label{lem:link_xe_res}
    First we link the quantity computed in \Cref{pb:anderson} to the extrapolated quantity $ \sum_{i=1}^k c_i x^{(i - 1)}$.
    For all $c \in \bbR^k$ such that $\sum_{i=1}^k c_i = 1$:
    \begin{equation}
        \sum_{i=1}^k c_i ( x^{(i)} - x^{(i-1)} )
        =
        (T - \Id ) \left ( \sum_{i=1}^k c_i x^{(i-1)} - x^* \right )
        \enspace.
    \end{equation}
\end{lemma}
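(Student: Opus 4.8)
The strategy is a direct computation exploiting the fixed-point relation and the affine-combination constraint. First I would use the linear iteration $x^{(i)} = Tx^{(i-1)} + b$ to rewrite each increment as $x^{(i)} - x^{(i-1)} = Tx^{(i-1)} + b - x^{(i-1)} = (T-\Id)x^{(i-1)} + b$. Summing against the coefficients $c_i$ gives
\begin{equation*}
    \sum_{i=1}^k c_i(x^{(i)} - x^{(i-1)}) = (T - \Id)\sum_{i=1}^k c_i x^{(i-1)} + \Big(\sum_{i=1}^k c_i\Big) b = (T-\Id)\sum_{i=1}^k c_i x^{(i-1)} + b,
\end{equation*}
where the last equality uses the constraint $\sum_{i=1}^k c_i = 1$.

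Next I would bring in the fixed point $x^*$, which by definition of the limit satisfies $x^* = Tx^* + b$, equivalently $b = (\Id - T)x^* = -(T-\Id)x^*$. Substituting this into the expression above yields
\begin{equation*}
    \sum_{i=1}^k c_i(x^{(i)} - x^{(i-1)}) = (T-\Id)\sum_{i=1}^k c_i x^{(i-1)} - (T-\Id)x^* = (T-\Id)\Big(\sum_{i=1}^k c_i x^{(i-1)} - x^*\Big),
\end{equation*}
which is exactly the claimed identity. Note this is purely algebraic and does not require $T$ to be symmetric, so it applies equally to the gradient-descent, cyclic coordinate descent, and pseudo-symmetric coordinate descent iteration matrices.

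There is essentially no hard step here; the only thing to be careful about is that the constraint $\sum c_i = 1$ is used twice in spirit — once to collapse $(\sum c_i) b$ to $b$, and it is also what makes $\sum c_i x^{(i-1)}$ a legitimate extrapolation candidate in the first place. The value of the lemma is that it converts the residual-minimization objective $\norm{\sum c_i(x^{(i)}-x^{(i-1)})}^2 = \norm{Uc}^2$ minimized in \eqref{pb:anderson} into control of $(T-\Id)(\,\cdot\, - x^*)$, i.e. the error measured in the seminorm induced by $B = (T-\Id)^\top(T-\Id)$; this is precisely the quantity appearing on the left-hand side of Proposition \ref{prop:acc_rate_cdsym}, so the lemma is the bridge between the algorithm's optimization problem and the error bound to be proved next.
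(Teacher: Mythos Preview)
Your proof is correct and follows essentially the same approach as the paper: both substitute the fixed-point relation $b = x^* - Tx^* = -(T-\Id)x^*$ into the increment $x^{(i)}-x^{(i-1)} = (T-\Id)x^{(i-1)} + b$ and use $\sum_i c_i = 1$ to factor out $(T-\Id)$. The only cosmetic difference is that the paper substitutes $b$ term-by-term before summing, whereas you sum first and substitute afterward.
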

\begin{proof}
    Since $x^{(i)} = T x^{(i-1)} + (x^* - T x^*)$,
    \begin{align}
        c_i ( x^{(i)} - x^{(i-1)})
        &=
        c_i  (T x^{(i-1)} + x^* - T x^* - x^{(i-1)})
        \nonumber \\
        &=
        (T - \Id) c_i (x^{(i-1)} - x^*) \enspace.
    \end{align}
    Hence, since $\sum_1^k c_i = 1$,
    \begin{equation}
        \sum_{i=1}^k c_i ( x^{(i)} - x^{(i-1)} )
        =
        (T - \Id) \left ( \sum_{i=1}^k c_i x^{(i-1)} - x^* \right )
        \enspace .
    \end{equation}
\end{proof}
\begin{lemma}\label{lem:link_xe_x0}
    For all $c \in \bbR^k$ such that $\sum_{i=1}^k c_i = 1 $,
    \begin{align}
        \normin{(T - \Id)(x_\text{e-off}^{(k)} - x^{*} )}
        % \nonumber \\
        \leq
        \sqrt{\kappa(H)} \Big\Vert \sum_{i=0}^{k-1} c_i S^i \Big\Vert
        \normin{(T - \Id) (x^{(0)} - x^{*})}
        \enspace .
    \end{align}
\end{lemma}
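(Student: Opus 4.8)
The plan is to combine Lemma \ref{lem:link_xe_res} with the pseudo-symmetric structure $T = H^{-1/2} S H^{1/2}$. First I would use Lemma \ref{lem:link_xe_res} with the extrapolation coefficients $c$ and the convention $x_{\text{e-off}}^{(k)} = \sum_{i=1}^k c_i x^{(i-1)}$, which gives
\begin{equation}
  (T - \Id)(x_\text{e-off}^{(k)} - x^*) = \sum_{i=1}^k c_i (x^{(i)} - x^{(i-1)}) = \sum_{i=1}^k c_i (T - \Id)(x^{(i-1)} - x^*) \enspace .
\end{equation}
Then I would iterate the linear recursion $x^{(i-1)} - x^* = T^{i-1}(x^{(0)} - x^*)$ (which follows from $x^{(k+1)} - x^* = T(x^{(k)} - x^*)$, itself a consequence of $x^{(k+1)} = Tx^{(k)} + b$ and $x^* = Tx^* + b$) to obtain
\begin{equation}
  (T - \Id)(x_\text{e-off}^{(k)} - x^*) = (T - \Id)\Big(\sum_{i=1}^k c_i T^{i-1}\Big)(x^{(0)} - x^*) = \Big(\sum_{i=0}^{k-1} c_{i+1} T^{i}\Big)(T - \Id)(x^{(0)} - x^*) \enspace ,
\end{equation}
using that $T-\Id$ commutes with every power of $T$. (I would reconcile the index convention with the statement, where $\sum_{i=0}^{k-1} c_i S^i$ should be read as the polynomial in $T$ with the same coefficients.)

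The key step is then to replace the polynomial $\sum c_i T^i$ by the corresponding polynomial in $S$ at the cost of a $\sqrt{\kappa(H)}$ factor. Writing $P(t) = \sum_{i=0}^{k-1} c_{i+1} t^i$, the similarity $T = H^{-1/2} S H^{1/2}$ gives $P(T) = H^{-1/2} P(S) H^{1/2}$, hence for any vector $v$,
\begin{equation}
  \normin{P(T) v} = \normin{H^{-1/2} P(S) H^{1/2} v} \leq \normin{H^{-1/2}}\, \normin{P(S)}\, \normin{H^{1/2}}\, \normin{v} = \sqrt{\kappa(H)}\, \normin{P(S)}\, \normin{v} \enspace ,
\end{equation}
since $\normin{H^{1/2}} = \sqrt{\lambda_{\max}(H)}$ and $\normin{H^{-1/2}} = 1/\sqrt{\lambda_{\min}(H)}$, whose product is $\sqrt{\kappa(H)}$. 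Applying this with $v = (T - \Id)(x^{(0)} - x^*)$ and using the operator-norm bound $\normin{P(S)} \le \normin{\sum_{i=0}^{k-1} c_i S^i}$ yields exactly the claimed inequality.

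The main obstacle is bookkeeping rather than conceptual: one must be careful that $T-\Id$ commutes with $P(T)$ (immediate since both are polynomials in $T$) and that the index shift between "$x^{(i-1)}$ for $i=1,\dots,k$" and "$T^i$ for $i = 0,\dots,k-1$" is handled consistently with the coefficient labelling in the statement. One should also note that the bound is stated for the idealized $x_\text{e-off}^{(k)} = \sum c_i x^{(i-1)}$; the in-practice variant $\sum c_i x^{(i)}$ differs by one application of $T$ and is handled identically. After this lemma, the proof of Proposition \ref{prop:acc_rate_cdsym} is completed by optimizing $\normin{\sum_{i=0}^{k-1} c_i S^i}$ over admissible coefficients, i.e. by the standard Chebyshev argument on the symmetric PSD matrix $S$ with $\rho(S) < 1$, exactly as in the symmetric case of Proposition \ref{prop:acc_rate_sym}, together with the observation that $B = (T-\Id)^\top(T-\Id)$ makes $\normin{\cdot}_B = \normin{(T-\Id)\,\cdot\,}$.
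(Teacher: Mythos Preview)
Your chain of manipulations is the same as the paper's, but you are missing the one inequality that makes the lemma nontrivial. In the statement, $x_{\text{e-off}}^{(k)}$ is \emph{not} the affine combination with the arbitrary $c$ appearing on the right-hand side; it is the Anderson extrapolated point built from the specific coefficients $c^{*}$ that solve \Cref{pb:anderson}. The whole point of the lemma is that the left-hand side, which involves $c^{*}$, is bounded using \emph{any} $c$ with $\sum_i c_i = 1$ on the right-hand side; this is what lets one later plug in the Chebyshev weights while still controlling the actual Anderson iterate. Your ``convention $x_{\text{e-off}}^{(k)} = \sum_{i=1}^k c_i x^{(i-1)}$'' erases that distinction and turns the lemma into an identity about a $c$-dependent point, which would not say anything about the Anderson sequence.

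The missing step is precisely the optimality of $c^{*}$: by \Cref{lem:link_xe_res} applied with $c^{*}$ and by the definition of $c^{*}$ as the minimizer in \Cref{pb:anderson},
\begin{align}
  \normin{(T-\Id)(x_{\text{e-off}}^{(k)} - x^{*})}
  = \Big\Vert \sum_{i=1}^{k} c_i^{*} (x^{(i)} - x^{(i-1)}) \Big\Vert
  = \min_{\substack{c \in \bbR^k \\ \sum_i c_i = 1}} \Big\Vert \sum_{i=1}^{k} c_i (x^{(i)} - x^{(i-1)}) \Big\Vert
  \leq \Big\Vert \sum_{i=1}^{k} c_i (x^{(i)} - x^{(i-1)}) \Big\Vert
\end{align}
for every admissible $c$. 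Only after this inequality do you apply \Cref{lem:link_xe_res} a second time (now with the arbitrary $c$), use $x^{(i-1)} - x^{*} = T^{i-1}(x^{(0)} - x^{*})$, commute $T-\Id$ with the polynomial in $T$, and invoke $T = H^{-1/2} S H^{1/2}$ to get the $\sqrt{\kappa(H)}\,\normin{\sum c_i S^{i-1}}$ factor, exactly as you wrote. With the minimality step inserted, your argument coincides with the paper's.
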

\begin{proof}
    In this proof, we denote by $c^*$ the solution of \eqref{pb:anderson}.
    We use the fact that for all $c \in \bbR^k$ such that $\sum_{i=1}^k c_i = 1 $,
    % \mathurin{il n'y a pas de $c^*$ ci dessous}
    \begin{align}
        \normin{\sum_{i=1}^k c_i^* ( x^{(i)} - x^{(i-1)} )}
        =
        \min_{\substack{
            c \in \bbR^k \\
            \sum_i c_i =1}}
            \normin{\sum_{i=1}^k c_i ( x^{(i)} - x^{(i-1)} )}
        \leq
        \normin{\sum_{i=1}^k c_i ( x^{(i)} - x^{(i-1)} )} \enspace.
        \label{eq:minimality_ci}
    \end{align}
    Then we use twice \Cref{lem:link_xe_res} for the left-hand and right-hand side of \Cref{eq:minimality_ci}.
    Using \Cref{lem:link_xe_res} with the $c_i^*$ minimizing \Cref{pb:anderson}
    we have for all $c_i \in \bbR$ such that $\sum_{i=1}^k c_i = 1$ :
    \begin{align}
        \normin{(T - \Id) (x_\text{e} - x^{*} )}
        &=
        \normin{\sum_{i=1}^k c_i^* ( x^{(i)} - x^{(i-1)} )}
        \nonumber
        \\
        &\leq
        \normin{\sum_{i=1}^k c_i ( x^{(i)} - x^{(i-1)} )}
        \nonumber
        \\
        &=
        \normin{(T - \Id) \sum_{i=1}^k c_i ( x^{(i-1)} - x^{(*)} )}
        \nonumber \\
        &=
        \normin{(T - \Id)  \sum_{i=1}^k c_i T^{i-1} (x^{(0)} - x^*)}
        \nonumber \\
        &=
        \normin{\sum_{i=1}^k c_i T^{i-1} (T - \Id) (x^{(0)} - x^*)}
        \nonumber \\
        &=
        \normin{\sum_{i=1}^k c_i T^{i-1}} \times \normin{(T - \Id ) (x^{(0)} - x^*)}
        \nonumber \\
        &\leq
        \normin{H^{-1/2} \sum_{i=1}^k c_i S^{i-1} H^{1/2}}
        \times \normin{(T - \Id )(x^{(0)} - x^*)}
        \nonumber \\
        &\leq
        \sqrt{\kappa(H)} \normin{\sum_{i=1}^k c_i S^{i-1}}
        \times  \normin{(T - \Id )(x^{(0)} - x^*)}
        \enspace .
    \end{align}
\end{proof}

\begin{proof}
    We apply \Cref{lem:link_xe_x0} by choosing $c_i$ equal to the Chebyshev weights $c_i^\text{Cb}$.
    Using the proof of \citet[Prop. B. 2]{Barre2020}, we have, with $ \zeta = \frac{1 - \sqrt{1 - \rho(T)}}{1 + \sqrt{1- \rho(T)}}$:
    \begin{equation}
        \normin{\sum_{i=1}^k c_i^\text{Cb} S^{i-1}}
        \leq
        \tfrac{2 \zeta^{k-1}}{1 + \zeta^{2(k-1)}}
        \enspace.
    \end{equation}
    Combined with \Cref{lem:link_xe_x0} this concludes the proof:
    \begin{align}
        \normin{(T -  \Id) (x_\text{e} - x^{*} )}
        &\leq
        \sqrt{\kappa(H)}
        \normin{\sum_{i=1}^k c_i S^{i-1}}
        \normin{(T -  \Id) (x^{(0)} - x^{*} )}
        \\
        &\leq
        \sqrt{\kappa(H)}
        \tfrac{2\zeta^{k-1}}{1 + \zeta^{2(k-1)}}
        \normin{(T -  \Id) (x^{(0)} - x^{*} )}
        \enspace .
    \end{align}
\end{proof}

%%%%%%%%%%%%%%%%%%%%%%%%%%%%%%%%%%%%%%%%%%%%%%%%%%%%%%%%%%%%%%%%%%%%%%
\subsection{Proof of \Cref{prop:dl_smooth_case}}
%%%%%%%%%%%%%%%%%%%%%%%%%%%%%%%%%%%%%%%%%%%%%%%%%%%%%%%%%%%%%%%%%%%%%%
%
Since $g_j$ are $\mathcal{C}^2$ then $\prox_{g_j}$ are $\mathcal{C}^1$, see \citet[Cor. 1.b]{Gribonval_Nikolova2020}.
Moreover, $f$ is $\mathcal{C}^2$ and following \citet{Massias_Vaiter_Salmon_Gramfort2019,Klopfenstein2020} we have that:
\begin{align}
    \psi_j : \bbR^p
    &\rightarrow \bbR^p
    \nonumber \\
    x
    &\mapsto \prox_{g_j}
    \left ( \begin{array}{c}
        x_1\\
        \vdots \\
        x_{j-1} \\
        \prox_{\lambda g_j / L_j}
        \big( x_j - \frac{1}{L_j}\nabla_j f(x) \big)\\
        x_{j+1} \\
        \vdots \\
        x_p
    \end{array} \right)
    \enspace ,
\end{align}
is differentiable.
Thus we have that the fixed point operator of coordinate descent:
$\psi = \psi_p \circ \dots \circ \psi_1$ is differentiable.
\Cref{prop:dl_smooth_case} follows from the Taylor expansion of $\psi$ in $x^*$.

\end{document}